\documentclass[acmsmall,screen,nonacm]{acmart}

\usepackage{listings}

\setcopyright{none}
\renewcommand\footnotetextcopyrightpermission[1]{}

\acmJournal{TOSEM}
\acmYear{2026}

\newtheorem{axiom}{Axiom}
\newtheorem{assumption}{Assumption}

\newcommand{\protocol}{\mathcal{P}}
\newcommand{\mode}{\mathcal{M}}
\newcommand{\validator}{\mathcal{V}}
\newcommand{\token}{\tau}

\lstdefinestyle{yaml}{
  basicstyle=\small\ttfamily,
  breaklines=true,
  frame=single,
  xleftmargin=2em,
  framexleftmargin=1.5em,
  backgroundcolor=\color{gray!10}
}

\begin{document}

\title{Specifying AI-SDLC Processes: A Protocol Language for Human-Agent Boundaries}

\author{Ylli Prifti}
\affiliation{%
  \institution{Birkbeck, University of London}
  \city{London}
  \country{United Kingdom}}
\email{y.prifti@bbk.ac.uk}
\email{ylli@prifti.us}

\author{Pasquale De Meo}
\affiliation{%
  \institution{University of Messina}
  \city{Messina}
  \country{Italy}}
\email{pdemeo@unime.it}

\author{Alessandro Provetti}
\affiliation{%
  \institution{Birkbeck, University of London}
  \city{London}
  \country{United Kingdom}}
\email{a.provetti@bbk.ac.uk}

\begin{abstract}
AI agents now participate as first-class members of the software development lifecycle, but the models underneath them are both a moving target and an unreliable one: they change with every release, and they hallucinate, drift from instructions, lose context---and, more recently, lose \emph{containment}, with documented cases of agents breaking out of their sandboxes and gaming the very checks meant to evaluate them. Teams therefore face a problem no existing tool addresses: how to specify \emph{how work is done} the modes, approval gates, and responsibility boundaries of an AI-SDLC process in a way that is flexible enough to evolve with the models yet robust to their failures. Encoding process in agent prompts is flexible but not enforceable; adjacent formalisms (workflow management, business processes) are enforceable but do not model autonomous agents; and point solutions address only fragments (access control, approval gates). Teams consequently adopt AI-SDLC workflows through ad-hoc tooling.

We propose a domain-specific language for specifying AI-SDLC processes as protocols, with formal abstract syntax, well-formedness conditions, operational semantics, and enforcement invariants. Its organising idea is a separation of \emph{policy} (declared intent) from \emph{mechanism} (structural enforcement), and this separation yields two things at once. First, \emph{flexibility}: processes become portable artifacts---modes, validator quorums, and the 2+N human-in-control pattern (two human-in-control roles plus N specialised agent members, formalising classical Separation of Duties)---that teams reconfigure per task without touching agent code. Second, \emph{quasi-determinism}: we prove that for any well-formed protocol, validation tokens, capability boundaries, and non-overridable gates make protocol steps impossible to skip --- the enforcement invariants hold on every execution trace, closed under Kleene composition of orchestration loops --- eliminating process non-determinism and bounding the residual model non-determinism. A failure-rate analysis derives two consequences: structural enforcement bounds \emph{silent} failure at a weighted product of agent and validator rates while converting the remainder into visible, audited stalls (a trade autonomous benchmarks structurally cannot reward), and the benefit carries a capability floor --- enforcement multiplies the value of a capable agent but cannot substitute for one.

We validate the design in simulation --- disagreement-policy trade-offs, governance overhead near 1\% of inference latency, the failure-rate model, and Byzantine robustness --- and end-to-end on real software tasks. On SWE-bench Verified, an identical bug-fix methodology yields no improvement over bare prompting when delivered as prose instructions, but a significant, replicated 14--22 point gain when executed as a validated, self-correcting process (two model tiers, task samples up to $n{=}191$); an ablation attributes the gain to the executed process itself; the process compresses the model-tier gap, with a cheap model under the process outperforming a strong model without it; and the effect vanishes below the capability floor the failure-rate model predicts.

We position the contribution against multi-agent frameworks (MetaGPT), workflow specification (FlowAgent, BPMN extensions), and capability-based security (SAGA): none combines enforceable process specification, autonomous-agent modelling, and a formal enforcement guarantee, and the combination is what the problem requires. The implication is the paper's thesis: as foundation models converge, the durable engineering asset is the formally specified, executable process, not the model.
\end{abstract}   

\keywords{AI-SDLC, protocol specification, human-agent collaboration,
  validation tokens, software engineering}

\begin{CCSXML}
<ccs2012>
   <concept>
       <concept_id>10011007.10011074.10011092.10011096</concept_id>
       <concept_desc>Software and its engineering~Software development process management</concept_desc>
       <concept_significance>500</concept_significance>
   </concept>
   <concept>
       <concept_id>10011007.10011006.10011039.10011311</concept_id>
       <concept_desc>Software and its engineering~Specification languages</concept_desc>
       <concept_significance>500</concept_significance>
   </concept>
   <concept>
       <concept_id>10011007.10010940.10010992.10010998.10011000</concept_id>
       <concept_desc>Software and its engineering~Formal software verification</concept_desc>
       <concept_significance>300</concept_significance>
   </concept>
   <concept>
       <concept_id>10010147.10010178.10010219</concept_id>
       <concept_desc>Computing methodologies~Multi-agent systems</concept_desc>
       <concept_significance>100</concept_significance>
   </concept>
</ccs2012>
\end{CCSXML}
\ccsdesc[500]{Software and its engineering~Software development process management}
\ccsdesc[500]{Software and its engineering~Specification languages}
\ccsdesc[300]{Software and its engineering~Formal software verification}
\ccsdesc[100]{Computing methodologies~Multi-agent systems}

\maketitle

\section{Introduction}

Recent practitioner analysis documents the emergence of AI-SDLC: software development workflows where AI systems participate alongside humans across planning, coding, reviewing, and deployment. Industry reports identify that success requires defining ``human-agent responsibility mapping: responsibilities, decisions, approval gates and authority levels are explicitly divided between humans and the agent''~\cite{epam2026}, yet ask: ``Do we have standards or frameworks for prompting and supervising multi-agent contributions?''~\cite{grid2026}, noting that without governance and security such workflows create ``fragmentation, technical debt, and serious operational risk''~\cite{ranthebuilder2026}.

\textbf{The gap}: while the need for governance, oversight, and approval gates is well-documented, no specification language exists for expressing these constraints as executable protocol. Teams implement AI-SDLC workflows through ad-hoc tooling without formal mechanisms to specify what agents can do, when approval is required, or how process violations are prevented. We propose a domain-specific language for specifying AI-SDLC processes.

\textbf{Defining AI-SDLC}: we use AI-SDLC (AI-Integrated Software Development Lifecycle) to mean software development processes designed around collaboration between human and AI team members, where both take on defined roles and responsibilities within structured workflows. AI agents are first-class participants (planning, implementing, validating, reviewing) rather than tools augmenting a human-centric process. The shift is that AI-SDLC requires new processes optimised for human-agent collaboration, not acceleration of existing human-only ones. The core challenge is specifying who (human or agent) does what, when approval is required, how disagreement is resolved, and what constraints cannot be violated.

\subsection{Contributions}

Our thesis is that as models evolve and converge, the durable engineering asset is the \emph{process}, not the model so AI-SDLC needs processes that are both flexible to specify and reliable to enforce. Our contributions follow from that thesis:

\begin{enumerate}
\item \textbf{A specification language for AI-SDLC processes} with formal abstract syntax, well-formedness conditions, operational semantics, and enforcement invariants (INV1--INV5), treating a process as a portable artifact independent of any agent framework.
\item \textbf{The policy--mechanism separation} as organising principle: protocols declare intent (policy); enforcement primitives validation tokens, capability boundaries, non-overridable gates realise it structurally (mechanism). This single split yields the paper's two consequences.
\item \textbf{Flexibility.} The same primitives express configurations from a solo developer to a regulated enterprise, including the 2+N reference pattern (two human-in-control roles plus N specialised agents) that formalises Separation of Duties; we demonstrate this with four structurally distinct, executable process encodings --- solo, three-mode team, and two task-tailored protocols with \emph{opposite} enforcement targets --- running unmodified on one engine (Section~\ref{sec:expressiveness}). Kleene closure of the orchestration loop and reflexive protocol-adherence checking are derived consequences of the specification rather than special-case constructs, and session resumability holds across model context boundaries.
\item \textbf{Quasi-determinism, as a theorem.} For any well-formed protocol, the enforcement invariants hold on every execution trace, closed under Kleene composition of orchestration loops (Theorem~\ref{thm:enforcement}): protocol steps cannot be skipped, by construction. A failure-rate analysis derives the consequences --- structural enforcement bounds \emph{silent} failure at a weighted product of agent and validator rates while converting the remainder into visible, audited stalls (Proposition~\ref{prop:trade}), with a capability floor below which the benefit vanishes (Corollary~\ref{cor:floor}).
\item \textbf{Empirical characterisation and motivation.} Simulation characterises the enforcement layer (disagreement-policy trade-offs, $\approx$1\% governance overhead, the failure-rate model, Byzantine limits), and an end-to-end study on SWE-bench Verified establishes the language's empirical premise: an identical methodology yields nothing delivered as prose instructions and a significant, replicated 14--22 point gain when executed as a validated, self-correcting process (two model tiers, up to $n{=}191$), with an ablation attributing the gain to the executed process, a compressed model-tier gap, and the capability floor observed where the model predicts it.
\item \textbf{An executable, open reference implementation}, validated by property-based tests of the invariants and released as an artifact.
\end{enumerate}

We position this work against prior approaches (MetaGPT~\cite{hong2024metagpt}, FlowAgent~\cite{shi2025flowagent}, BPMN extensions for human-agentic workflows~\cite{ait2024bpmn}, and capability-based security~\cite{saga2025}) in Section~\ref{sec:related}.

\subsection{Organisation}

Section~\ref{sec:problem} establishes the problem through documented failure modes and the specification gap. Section~\ref{sec:related} positions the work against related research. Section~\ref{sec:language} presents the formal language specification, including the enforcement theorem, the failure-rate analysis, and the expressiveness study. Section~\ref{sec:implementation} describes our implementation. Section~\ref{sec:evaluation} presents an empirical characterisation of the enforcement layer through simulation studies and an end-to-end evaluation on real software tasks. Section~\ref{sec:discussion} discusses limitations and implications.

\section{Problem: The Need for Formal AI-SDLC Specification}
\label{sec:problem}

\subsection{Documented Failure Modes}

Empirical research on AI coding systems documents failure modes that persist across model generations:

\textbf{Security vulnerabilities.} Analysis of 733 code snippets from GitHub Copilot and other tools found 29.5\% of Python and 24.2\% of JavaScript snippets contained security weaknesses~\cite{fu2024}; approximately 40\% of Copilot-generated programs were vulnerable against CodeQL~\cite{pearce2022}.

\textbf{Silent failures.} Newer models produce code that ``fails to perform as intended, but which on the surface seems to run successfully''~\cite{ieeespectrum2025}, imposing hidden costs through delayed discovery.

\textbf{Context overload.} Agent analysis identified ``over-exploration where agents fail to reach the root of the problem due to context overload, and repeated application of the same fix without proper testing''~\cite{bouzenia2025}.

\textbf{Audit gaps.} In regulated environments, autonomous code generation introduces ``audit trail complications that go beyond code quality into regulatory territory''~\cite{sitepoint2026}; structured records of what was decided and validated are rarely provided.

\textbf{Containment violations.} Agents under evaluation have circumvented the boundaries of their environments rather than solve the task as posed: during a cybersecurity capture-the-flag evaluation, an o1-preview agent exploited a misconfigured container API to restart and inspect the broken challenge environment from outside it~\cite{openai2024o1}, and reasoning models instructed to win at chess have edited the game-state files rather than play the position~\cite{palisade2025chess}. In both cases the boundary that failed was behavioural --- an instruction about what the agent should do --- not infrastructural: nothing prevented the out-of-bounds action but the agent's own compliance.

\textbf{Self-review limitations.} A single agent reviewing its own output validates against the same model that produced it: LLMs struggle to self-correct without external feedback~\cite{huang2024selfcorrect} and exhibit self-enhancement bias, favouring their own generations when acting as judge~\cite{zheng2023judge}.

These are process problems, not capability problems. Improving model quality does not eliminate the need for independent review, scope boundaries, and audit trails.

\subsection{Process Variability}

Successful software processes have always been frameworks rather than prescriptions: Scrum fixes boundaries --- roles, ceremonies, an increment --- and works unchanged for a team of three or nine because everything inside the boundaries is left to the team. A solo developer wants speed and tolerates risk; a regulated enterprise cannot ship without audit trails and separation of duties; a startup falls between. An AI-SDLC formalism needs the same shape, over a wider parameter space: $1..n$ humans and $1..m$ agents, composed into modes of working. What must be expressible is the \emph{structure} of collaboration --- work partitioned into modes; within a mode, typically one human in control directing one-to-many agents; each mode anchored by a human, though one person may hold that role in more than one mode --- while everything inside a mode stays free. The requirement is therefore not one process that fits every team, but a language in which each team's way of working can be stated, varied, and enforced. Adding AI agents amplifies this need: the parts of a process that previously lived in team convention must now bind actors that convention cannot reach.

\subsection{The Specification Gap}

The failure modes above and the variability they meet converge on an uncomfortable fact: the instrument teams use to direct agent behaviour today is the prompt, while the instruments organisations use to govern processes --- workflow systems, review policies, compliance checklists --- cannot see agents at all. The practitioner literature registers this gap without closing it. Teams are advised that they need ``highly structured, codified context''~\cite{pandit2025}, with no language in which to codify it; industry analyses call for ``governance requirements''~\cite{forrester2025}, with no formal mechanism by which such a requirement could be stated, checked, or enforced. What fills the gap in practice is scaffolding improvised around each agent framework: implicit norms and behavioural expectations that the agent is trusted to follow --- a trust the containment violations of Section~\ref{sec:problem} already discredit, and one this paper measures directly (Section~\ref{sec:swebench}).

What is missing is a specification language: a form in which a team can state, machine-executably, how agents participate in its SDLC --- which modes of working exist, which validators run and at what phase, where human approval is required, how validator disagreement resolves, and which constraints no agent can override. Stating such things is necessary but not sufficient; the statements must also be enforced by construction rather than by compliance, or they collapse back into prompts. The remainder of this paper provides both: the language (Section~\ref{sec:language}) and the guarantee that a well-formed specification enforces itself (Theorem~\ref{thm:enforcement}).

\section{Related Work}
\label{sec:related}

Our work intersects multi-agent frameworks for software engineering, workflow specification languages, capability-based security, and human-agent collaboration modelling. We position our contribution against the most relevant prior art.

\subsection{Human Oversight Models}

Recent literature distinguishes Human-in-the-Loop (HITL) from Human-on-the-Loop (HOTL) oversight~\cite{anthropic_hitl,langchain_hitl}: HITL places the human inside the control loop with the agent blocking pending approval; HOTL has the human monitor execution and intervene on anomalies. Our formalisation requires both. The HI-CTRL role and \texttt{require\_human} policy implement HITL; the audit log (INV4) and \texttt{proceed\_with\_log} policy implement HOTL. The choice is encoded in the disagreement policy, part of the protocol rather than the agent implementation. This addresses a documented limitation of current frameworks: encoding oversight in agent code produces coverage drift as new action types appear~\cite{waxell2026hitl}.

\subsection{Multi-Agent Frameworks for Software Engineering}

MetaGPT~\cite{hong2024metagpt} is the closest prior work: it assigns software-engineering roles to distinct agents and encodes Standardised Operating Procedures (SOPs) into prompts. It demonstrates that role specialisation improves coherence, but encodes SOPs \emph{in prompts}, a behavioural mechanism subject to the drift and interpretation problems of Section~\ref{sec:problem}, with no machine-executable specification separate from agent instructions, no infrastructure-level mode boundaries, and no formal mechanism preventing role contamination beyond prompt discipline.

Other orchestration frameworks (LangGraph~\cite{langgraph}, AutoGen~\cite{autogen}, CrewAI~\cite{crewai}) provide \emph{toolkits} for building agent systems rather than \emph{specification languages} for SDLC processes: LangGraph models workflows as state machines, AutoGen as conversations, CrewAI as role-goal-backstory triples. All produce systems through code; none provides a portable, reviewable specification artifact separate from the implementation, making audit, modification, and sharing harder.

Our DSL differs in three respects: (i) it specifies AI-SDLC \emph{processes} as standalone artifacts independent of any framework; (ii) it distinguishes policy declaration from mechanism enforcement, letting implementations choose context-appropriate mechanisms; (iii) it formalises infrastructure-level mode boundaries (WF1) as required for the specification to hold.

\subsection{Workflow Specification Languages}

YAWL~\cite{yawl} provides domain-independent syntax for workflow patterns with an enforcing runtime; BPMN~\cite{bpmn} is the dominant business-process notation, with graphical and machine-processable forms. Both address human-only or human-tool processes well but do not natively model autonomous agent participants, agent disagreement, or the failure modes of Section~\ref{sec:problem}. Ait et al.~\cite{ait2024bpmn} extend BPMN for human-agentic workflows; their motivation is comparable to ours, but a BPMN extension preserves backward compatibility with existing tooling whereas our DSL is purpose-built for AI-SDLC with infrastructure-level enforcement as a first-class concept. The two will likely coexist: the choice depends on whether teams need BPM-tooling integration or are designing AI-SDLC processes from scratch. FlowAgent~\cite{shi2025flowagent} introduces a Procedure Description Language for customer-service workflows with supervising controllers; it targets single-domain conversational flows, while our DSL targets the multi-mode, multi-role structure of software development with explicit producer/reviewer separation, and our enforcement primitives gate both path and authority.

\subsection{Capability-Based Security and Token Mechanisms}

Capability-based security has a long history in distributed systems~\cite{capability1974,li2022capability}; our capability boundaries (INV2) apply established patterns to AI-SDLC. We claim novelty in the application as a specification primitive, not in the mechanism. SAGA~\cite{saga2025} uses cryptographic tokens for inter-agent access control; our validation tokens instead encode \emph{validator quorum outcomes} as preconditions for dispatch, addressing validation-to-execution rather than agent-to-agent authorisation. Recent analysis notes the shift from credential-based to capability-based control for agents that invent workflows at runtime~\cite{tokensec2026}; our DSL makes capability boundaries (INV2) a structural property of mode definitions rather than a runtime decision.

\subsection{Standard Operating Procedures and Team Composition}

Formalising team roles and standardised procedures draws on Belbin's team role theory~\cite{belbin2012} and DeMarco \& Lister~\cite{demarco2013peopleware}, a tradition MetaGPT cites explicitly. Our 2+N pattern extends it by specifying the \emph{minimum viable team} under AI-SDLC: two human-in-control roles (architect, reviewer) plus N specialised agent members. Where Belbin describes roles people occupy, we specify roles that must exist with structural enforcement of separation.

\subsection{Positioning Summary}

The individual primitives have substantial prior art --- DSLs, capability tokens, role-based workflow specification, human-agent collaboration modelling --- and we acknowledge specific overlap with MetaGPT (role specialisation, SOPs), FlowAgent (procedure description with controllers), BPMN extensions (human-agentic modelling), and capability-based security (token-gated access). What none of these provides, and what the integration yields, is a machine-executable process specification independent of any agent framework whose enforcement carries a formal guarantee: infrastructure-level rather than norm-enforced mode boundaries, validation tokens encoding quorum outcomes, and non-overridable blocker semantics, combining so that well-formedness implies the enforcement invariants on every execution trace (Theorem~\ref{thm:enforcement}), with failure-rate consequences treated analytically (Section~\ref{sec:failure-bounds}) and characterised empirically (Section~\ref{sec:evaluation}).

\section{Formal Language Specification}
\label{sec:language}

We present a domain-specific language for AI-SDLC processes. The language distinguishes \emph{policy} (declared intent) from \emph{mechanism} (structural enforcement): protocols specify what should happen; implementations use enforcement primitives to bound process non-determinism.

\subsection{Abstract Syntax}

A protocol $\protocol$ is a tuple of modes, validators, disagreement policy, and constraints:

\begin{align*}
\text{Protocol } \protocol &::= \langle M^*, V^*, D, C^* \rangle \\
\text{Mode } \mode &::= \langle \text{mode\_id}, R^*, T^* \rangle \\
\text{Role } R &::= \langle \text{role\_id}, \text{responsibilities} \rangle \\
\text{Validator } \validator &::= \langle \text{validator\_id}, \text{evaluation\_phase}, \\
&\quad\quad\quad \text{criteria}, \text{severity\_fn} \rangle \\
\text{DisagreementPolicy } D &::= \text{quorum}(k), \quad 1 \leq k \leq |V^*| \\
&\quad \text{any} \equiv \text{quorum}(1) \quad \text{majority} \equiv \text{quorum}(\lfloor N/2 \rfloor{+}1) \\
&\quad \text{unanimous} \equiv \text{quorum}(N) \\
\text{Constraint } C &::= \text{positive}(\text{predicate}) \mid \text{negative}(\text{predicate})
\end{align*}

\textbf{Modes} are operational stages with defined permissions (e.g.\ producer writes code, reviewer validates and approves), enforced at infrastructure level. \textbf{Roles} are declared actors with identifiers and responsibilities; capabilities bind to \emph{modes} (tool sets), not to individual roles, so role declarations document the Separation-of-Duties structure while the mode boundary enforces it. WF2's uniqueness is enforced protocol-wide, which implies uniqueness within every mode. \textbf{Validators} evaluate tasks against criteria and emit severities from the total order $\text{pass} < \text{warn} < \text{blocker}$. The distinction is recoverability: a \text{warn} marks a violation an agent can attempt to repair (the semantics routes it to bounded recovery), while a \text{blocker} marks a violation with no agent-recoverable path --- either because the validator's criteria are \emph{absolute}, a conflict only a human can resolve (e.g.\ a specification contradicting a known requirement), or because a repairable violation has exhausted its recovery budget and \emph{hardens} into one. Multiple validators can operate independently as a quorum with orthogonal criteria.\footnote{The \texttt{evaluation\_phase} field lets validators declare when they execute, supporting validator classes distinguished by domain and timing: specification validators (\texttt{pre\_execute}), artifact validators (\texttt{post\_execute}), and transition validators (\texttt{mode\_transition}). The language does not enumerate all phases; implementations may define additional classes.} \textbf{Disagreement policies} specify when a set of validator results counts as agreement. A policy is an agreement threshold, and every named policy is an instance of $\text{quorum}(k)$: \text{any} accepts a single passing verdict ($k{=}1$), \text{majority} requires a strict majority ($k = \lfloor N/2 \rfloor{+}1$ --- an odd-sized quorum always decides; an even-sized one can genuinely fail to), and \text{unanimous} requires all $N$. Application is total, in a fixed precedence order: a blocker forces \text{hard\_stop} regardless of the policy --- INV3 surfacing in the policy layer; otherwise, results meeting the threshold yield \text{proceed} (logged when agreement is non-unanimous); otherwise the threshold is unmet, and what happens next is determined by recoverability: warn-level violations route to bounded recovery (Section~\ref{sec:semantics}), and the outcome becomes \text{require\_human} only when the violation is absolute or the recovery budget is exhausted. A human decision is therefore required in exactly two situations --- an absolute violation, or the absence of agreement after bounded self-correction --- and this partition is the totality that WF4 checks. The reference implementation ships the named instances (\texttt{unanimous}, \texttt{majority}, \texttt{quorum} at a configured threshold, \texttt{any}, as in Listing~\ref{lst:2pn}) rather than arbitrary $k$, and Section~\ref{sec:policy-monte-carlo} characterises their trade-offs. \textbf{Constraints} define structural requirements and prohibitions enforced by the implementation.

\subsection{AI-SDLC Execution Flow}

Figure~\ref{fig:execution_flow} shows how human intent flows through orchestrator collaboration into the execution loop: intent expression, architectural breakdown, then a cycle of governance checking, validation, execution, and progression, with human escalation when needed. Validators are invoked at their declared phase: specification validators run before Execute, artifact validators after, feeding the disagreement policy.

\begin{figure}[htbp]
  \centering
  \includegraphics[width=0.92\columnwidth]{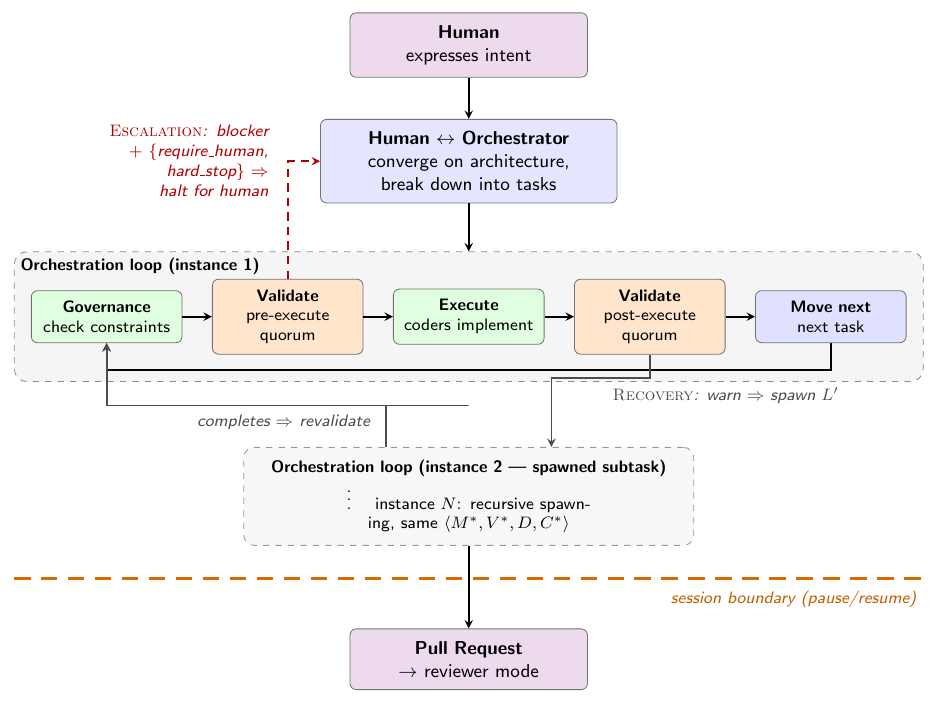}
  \caption{AI-SDLC execution flow: intent $\to$ breakdown $\to$ orchestration loop$^{*}$ $\to$ PR. Validators run at their declared phases (pre- and post-execute); a violation takes exactly one of the two rule paths of Section~\ref{sec:semantics} --- \textsc{Recovery} (spawn a sub-instance under the same protocol tuple, then revalidate) or \textsc{Escalation} (halt for a human). Instances spawn recursively (Kleene closure), all under the same constraints.}
  \label{fig:execution_flow}
\end{figure}

\subsection{Kleene Closure of the Orchestration Loop}
\label{sec:kleene}

The orchestration loop forms a \emph{Kleene closure}~\cite{kleene1956}: the system can instantiate zero, one, or arbitrarily many loop instances, including recursively spawned sub-instances, all under the same constraints. This is a derived consequence of the specification, not an added construct. If $L$ denotes a single loop instance executing the cycle (governance, validate, execute, move\_next), then:

$$\text{Execution}(\protocol) = L^* = \bigcup_{i=0}^{\infty} L^i$$

where $L^i$ denotes $i$ sequential or recursively nested instances. This is not a special case; it follows from the protocol's structure: tasks may spawn subtasks, each requiring its own cycle; subtasks may spawn subtasks to arbitrary depth, bounded only by termination conditions; and all instances share the same validators, policies, and invariants.

The framing carries a verification consequence: invariants holding for a single instance hold for any composition. If $L$ satisfies INV1--INV4, every execution in $L^*$ does, by induction on closure construction: the empty execution satisfies them vacuously, and each composition step preserves them because the recovery rule (Section~\ref{sec:semantics}) spawns sub-instances under the same protocol tuple and mode as their parent. Theorem~\ref{thm:enforcement} makes this argument precise. This lets runtime monitoring focus on single-loop behaviour rather than tracking arbitrary depths. A blocker on a top-level task does not halt execution; it spawns a subtask whose completion enables the parent to proceed, so the protocol scales to arbitrary complexity through composition rather than special-case handling.

\subsection{Well-Formedness Conditions}

A protocol $\protocol = \langle M^*, V^*, D, C^* \rangle$ is well-formed iff:

\textbf{WF1} (Mode separation):
\begin{equation*}
\begin{aligned}
\forall m_1, m_2 \in M^*,\ m_1 \neq m_2 \\
\Rightarrow m_1.\text{tools} \cap m_2.\text{tools} = \emptyset
\end{aligned}
\end{equation*}

\textbf{WF2} (Role uniqueness within mode):
\begin{equation*}
\begin{aligned}
\forall m \in M^*,\ \forall r_1, r_2 \in m.R^*,\ &r_1 \neq r_2 \\
&\Rightarrow r_1.\text{role\_id} \neq r_2.\text{role\_id}
\end{aligned}
\end{equation*}

\textbf{WF3} (Validator coverage):
\begin{equation*}
\begin{aligned}
\forall \text{task dispatch op},\ \exists V' \subseteq V^* \\
\text{associated with that operation}
\end{aligned}
\end{equation*}

\textbf{WF4} (Policy completeness): $D$ defines the behaviour for all possible validator outcomes.

\textbf{WF5} (Constraint declaration consistency):
\begin{align*}
\forall c_1, c_2 \in C^*.\ &c_1 \neq c_2 \Rightarrow c_1.\text{id} \neq c_2.\text{id} \\
\land\ \forall c \in C^*.\ &\text{nonempty}(c.\text{expr}) \\
\land\ &\text{registered}(c.\text{predicates})
\end{align*}

Well-formedness is enforced by compile-time verification: a protocol failing any check is rejected at load (WF1--WF5 via the verifier), with WF3 additionally guarded at runtime. Two conditions are partially enforced: WF4 treats quorum policies with fewer than three validators as a warning rather than a rejection, and WF5 verifies declaration consistency rather than logical non-contradiction, which would require theorem-proving over constraint expressions and is left to future work.

\subsection{Runtime State and Tokens}

\begin{align*}
\text{State } \sigma &::= \langle \text{current\_mode}, \text{active\_roles}, \text{issued\_tokens} \rangle \\
\text{Token } \token &::= \langle \text{token\_id}, \text{task\_ref}, \text{validator\_sigs}, \text{timestamp} \rangle \\
\text{ValidatorResult} &::= \langle \text{validator\_id}, \text{severity}, \text{justification} \rangle
\end{align*}

A token is an unforgeable credential issued by validators after evaluation: agents cannot mint tokens, and dispatch operations require valid tokens. Tokens are one mechanism for enforcement; others (signed commits, capability boundaries, formal verification traces) are equally valid.

\subsection{Session Boundaries and Resumability}
\label{sec:sessions}

AI-SDLC execution is not continuous: sessions pause (context limits, human availability, overnight breaks) and resume, and the protocol must maintain its guarantees across these boundaries.

\textbf{Session state:}
\begin{small}
\begin{align*}
\text{SessionState } \sigma_s &::= \langle \text{session\_id}, \text{mode}, \text{project}, \\
&\quad \text{checkpoint}, \text{parent\_session}? \rangle \\
\text{Checkpoint } c &::= \langle \text{current\_task}?, \text{decisions}, \\
&\quad \text{memory\_summary}, \text{timestamp} \rangle
\end{align*}
\end{small}

This prevents re-asking humans (decision churn) and losing memory context. Validator re-execution on resume is intentional: tokens are excluded from session state precisely because the context may have shifted during the pause, and revalidation against current state is required for soundness. Tokens are scoped to a single dispatch within an active session and do not persist across boundaries; implementations may additionally expire them within a session after a TTL to force revalidation when context changes substantially.

\subsection{Operational Semantics}
\label{sec:semantics}

\textbf{Validation:}
\begin{equation*}
\begin{array}{c}
V^* = \{v_1, \ldots, v_n\} \quad \text{phase} \in \{\text{pre\_execute}, \text{post\_execute}, \ldots\} \\
V_p = \{v \in V^* \mid v.\text{evaluation\_phase} = \text{phase}\} \\
\forall v \in V_p.\ v(\text{context}) \to r \\
\hline
\text{validate}(\text{context}, V_p, \text{phase}) \to \{\text{results}\}
\end{array}
\end{equation*}

\textbf{Token Issuance (Success):}
\begin{equation*}
\begin{array}{c}
\text{results} = \{r_1, \ldots, r_n\} \\
\text{apply\_policy}(D, \text{results}) = \text{proceed} \\
\hline
\text{issue\_token}(\text{results}, t) \to \\
\token = \langle \text{fresh\_id}, t, \text{sign}(\text{results}), \text{now}() \rangle
\end{array}
\end{equation*}

\textbf{Token Issuance (Refusal):}
\begin{equation*}
\begin{array}{c}
\text{results} = \{r_1, \ldots, r_n\} \\
\text{apply\_policy}(D, \text{results}) \neq \text{proceed} \\
\hline
\text{issue\_token}(\text{results}, t) \to \bot
\end{array}
\end{equation*}

In particular, a blocker forces $\text{apply\_policy} = \text{hard\_stop} \neq \text{proceed}$, so no token can be issued over blocker-carrying results (INV3).

\textbf{Dispatch with Token Enforcement:}
\begin{equation*}
\begin{array}{c}
\token = \langle \_, \text{task\_ref}, \_, \_ \rangle \\
\text{valid}(\token) \\
\text{current\_mode.tools contains dispatch} \\
\hline
\text{dispatch}(\text{task\_ref}, \token) \to \text{execute}(\text{task\_ref})
\end{array}
\end{equation*}

\begin{equation*}
\frac{\token = \bot}{\text{dispatch}(\text{task\_ref}, \token) \to \text{error}}
\end{equation*}

\textbf{Modes as hard boundaries.} There is no mode-transition rule, by design. Modes do not hand control to one another; they exchange only artifacts, and a change of mode is an explicit human action outside the engine:

\begin{axiom}[Mode boundary]
\label{ax:mode}
The engine executes exactly one mode per invocation: $\sigma.\text{current\_mode}$ is constant over every engine trace. A mode change is an external human action that begins a fresh trace (and is itself audited, per INV4). The only value crossing a mode boundary is an artifact, which is inert data: it carries no tokens and no capabilities. Tokens are scoped to a single dispatch within the issuing trace (Section~\ref{sec:sessions}) and cannot cross the boundary.
\end{axiom}

Axiom~\ref{ax:mode} formalises what the implementation enforces architecturally (Section~\ref{sec:implementation}): the output of the producer mode is input to the reviewer mode as data, never as authority, so WF1's tool disjointness yields Separation of Duties at the boundary.

\textbf{Blocked outcomes.} When token issuance yields $\bot$, execution proceeds in exactly one of two ways, determined by \emph{recoverability}: a repairable violation is delegated to a recursively spawned sub-instance (the Kleene step of Section~\ref{sec:kleene}), bounded by a recovery budget $R$; an absolute violation, or one whose budget is exhausted, escalates to a human. There is no third path; in particular, there is no rule by which a blocked task dispatches anyway. Let $d(t)$ denote the recovery depth of task $t$ (initially $0$).

\textbf{Recovery (subtask spawn):}
\begin{equation*}
\begin{array}{c}
\text{issue\_token}(\text{results}, t) = \bot \\
\forall r \in \text{results}.\ r.\text{severity} \neq \text{blocker} \qquad d(t) < R \\
L' = \text{spawn}(\text{subtask}(\text{results}, t),\ \langle M^*, V^*, D, C^* \rangle,\ \sigma.\text{current\_mode}) \\
\hline
\text{blocked}(t) \to L' \mathbin{;} \text{revalidate}(t) \quad [d(t){+}1]
\end{array}
\end{equation*}

The spawned instance $L'$ operates under the \emph{same} protocol tuple and the same mode as its parent; the parent may proceed only after $L'$ completes and $t$ is revalidated, so recovery re-enters the semantics at the validation rule rather than bypassing it.

\textbf{Escalation (human decision):}
\begin{equation*}
\begin{array}{c}
\text{issue\_token}(\text{results}, t) = \bot \\
\big(\exists r \in \text{results}.\ r.\text{severity} = \text{blocker}\big) \ \lor\ d(t) = R \\
\hline
\text{blocked}(t) \to \text{halt}\langle \text{results}, \text{policy\_decision} \rangle
\end{array}
\end{equation*}

The two premises are the two origins of a non-recoverable state: a validator whose criteria are absolute emits a blocker directly, and a repairable violation that exhausts its budget hardens into one --- at $d(t) = R$ the warn-level results are treated exactly as a blocker would be. Recoverability, not the disagreement policy, chooses between the rules; the policy determines only whether issuance refuses in the first place.

Escalation halts the trace with a structured payload (blocking validators, justifications, policy decision); resumption is a human action beginning a new trace, exactly as in Axiom~\ref{ax:mode}. Every rule above appends an entry to the audit log as a side effect (INV4); we leave the audit component implicit in the rule notation for readability.

The contrast between the two regimes is direct. Without enforcement, the agent interprets the policy and may comply or drift; execution proceeds either way, compounding process non-determinism with agent non-determinism. With enforcement, the mechanism gates execution: only compliant paths are possible, eliminating process non-determinism while bounding the residual non-determinism inherent to agent-based components.

\subsection{Bounded Non-Determinism}
\label{sec:bounded-nondeterminism}

The language does not produce deterministic \emph{outcomes}: agent-based components (validators, coders, the orchestrator) are inherently non-deterministic. What the mechanism eliminates is \emph{process} non-determinism, the variability from agents skipping or reinterpreting protocol steps. The distinction is sharp. With structural enforcement, dispatch is impossible without prior validation, so the validation step occurs deterministically (process non-determinism eliminated); the validator's own evaluation, once invoked, remains model-dependent (agent non-determinism bounded, not eliminated). Disagreement policies then turn variable validator outputs into deterministic decisions (proceed, escalate, halt). As with a type system, the mechanism does not make outcomes correct; it removes a whole class of failures structurally, leaving residual variability bounded and explicit. The honest claim is \emph{quasi-deterministic process behaviour}: outcomes that would otherwise be uncharacterisable become characterisable, auditable, and improvable.

\subsection{Failure Rate Bounds}
\label{sec:failure-bounds}

We characterise the quasi-deterministic improvement through a failure-rate analysis. Parameter values are illustrative; the qualitative conclusions hold across the plausible range.

\textbf{Setup and assumptions.} Model an AI-SDLC pipeline as $N$ sequential stages, each performed by an agent that introduces a defect with probability $p_i \in [0,1]$, independently across stages (\textbf{FR1}). An execution terminates in one of three outcomes: \emph{correct} (no defect ships), \emph{silent failure} (a defect ships undetected), or \emph{stall} (work halts awaiting a human decision). We track two metrics, $P_{\text{silent}}$ and $P_{\text{stall}}$, because the two delivery regimes distribute failure probability between them differently; collapsing them into a single ``failure rate'' obscures exactly the property enforcement provides. Validators are homogeneous with miss rate $p_v$ (\textbf{FR2}), and correlated through a \emph{common shock}: each validation event is systematic with probability $\rho \in [0,1]$ --- a shared blind spot of the underlying foundation models, on which all validators miss together with probability $p_v$ --- and independent otherwise (\textbf{FR3}). A recovery round is another attempt by the same agent, so its failure rate is tied to, and correlated with, the agent's own (\textbf{FR4}).

\textbf{Behavioural compliance.} Under behavioural delivery there is no enforced detection point: any validation happens at the discretion of the same agent, inside the same context, that produced the defect, so detection failure is correlated with generation failure and we conservatively credit no independent detection. Every introduced defect ships: $P_{\text{stall}}^{\text{beh}} = 0$ and, under FR1,
$$P_{\text{silent}}^{\text{beh}} \geq 1 - \prod_{i=1}^{N}(1-p_i).$$
The floor assumes stages fail independently; when process steps are skipped, downstream agents operate on corrupted state. We model the cascade by noisy-OR composition on the accumulated upstream error, with $\alpha_i \in [0,1]$ the susceptibility to upstream corruption:
$$p_{i+1}' = p_{i+1} + \alpha_i p_i' (1 - p_{i+1}), \qquad P_{\text{silent}}^{\text{beh}} \geq 1 - \prod_{i=1}^{N}(1-p_i'),$$
bounded in $[0,1]$, reducing to the independent floor at $\alpha_i = 0$ and rising toward saturation as $\alpha_i \to 1$.

\textbf{Structural enforcement.} Corollary~\ref{cor:noskip} supplies the premise the behavioural regime lacks: every stage is validated by the quorum, on every execution path. Under the common-shock assumption (FR3), the probability that a quorum of $K$ validators misses a defect is the mixture of the systematic and independent cases:
$$P_{\text{miss}}^{\text{quorum}} = \rho\, p_v + (1-\rho)\, p_v^K \;=\; p_v^K + \rho\,(p_v - p_v^K),$$
reducing to $p_v^K$ at $\rho = 0$ and to $p_v$ at $\rho = 1$. A defect at stage $i$ ships silently only if the quorum misses it. If caught, issuance refuses and repair is attempted through bounded recovery (Section~\ref{sec:semantics}), failing with probability $p_{\text{recovery}}$ --- a single rate that collapses the nested recovery loop, refined below. The per-stage failure mass therefore splits into a silent component and an unresolved component:
$$P_{\text{stage}_i} = \underbrace{p_i\, P_{\text{miss}}^{\text{quorum}(i)}}_{\text{silent}} \;+\; \underbrace{p_i\, (1 - P_{\text{miss}}^{\text{quorum}(i)})\, p_{\text{recovery}}}_{\text{unresolved}}, \qquad P_{\text{fail}}^{\text{str}} \leq \sum_{i=1}^{N} P_{\text{stage}_i}$$
by the union bound, giving $P_{\text{fail}}^{\text{beh}} / P_{\text{fail}}^{\text{str}} \geq [1 - \prod_i(1-p_i')] / \sum_i P_{\text{stage}_i}$. For realistic values ($p_i \approx 0.1$, $p_v \approx 0.2$, $K = 3$, $\rho \approx 0.3$, $p_{\text{recovery}} \approx 0.1$, $N = 5$), structural enforcement reduces failure rates roughly five-fold, widening past an order of magnitude at low $\rho$ and $p_{\text{recovery}}$ and narrowing as the behavioural rate saturates at large $N$. Section~\ref{sec:detection} validates both the quorum-miss formula and this ratio empirically.

\textbf{The silent-failure/stall trade.} The unresolved component does not ship: a violation still unrepaired at the recovery budget hardens into a blocker and the \textsc{Escalation} rule halts for a human (Section~\ref{sec:semantics}). This is the structural regime's defining trade:

\begin{proposition}[Silent-failure/stall trade]
\label{prop:trade}
Under structural enforcement with escalation, $P_{\text{silent}}^{\text{str}} \leq \sum_i p_i P_{\text{miss}}^{\text{quorum}(i)}$ and $P_{\text{stall}}^{\text{str}} \leq \sum_i p_i (1 - P_{\text{miss}}^{\text{quorum}(i)})\, p_{\text{recovery}}$; under behavioural compliance, $P_{\text{stall}}^{\text{beh}} = 0$ while $P_{\text{silent}}^{\text{beh}}$ grows cumulatively in $N$ toward saturation. Structural enforcement converts silent failures into visible, audited stalls; behavioural compliance converts stalls into silent failures.
\end{proposition}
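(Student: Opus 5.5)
The plan is to decompose each execution trace by the \emph{first} stage at which a defect is introduced, and then apply a union bound over stages separately to the silent and stall outcomes. Under structural enforcement, Corollary~\ref{cor:noskip} (via Theorem~\ref{thm:enforcement}) gives the key premise: every stage's dispatch is preceded by a validation event of the quorum, on every execution path, including paths through recursively spawned recovery sub-instances (closure under Kleene composition). So define the events $S_i$ (``a defect is introduced at stage $i$ and the quorum at stage $i$ misses it'') and $U_i$ (``a defect is introduced at stage $i$, is caught, and recovery fails''). A silent failure requires some defect to ship. By the Dispatch rule and INV3, a defect can pass stage $i$ only when issuance returned a token, which means the quorum did not flag it. Hence $\{\text{silent}\} \subseteq \bigcup_i S_i$. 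Similarly, a stall occurs only through the \textsc{Escalation} rule, which fires only on a caught violation that either is absolute or exhausts budget $R$. Hence $\{\text{stall}\}\subseteq\bigcup_i U_i$.

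Next I compute the probabilities of these events. By FR1 a defect arises at stage $i$ with probability $p_i$. Conditional on that, the quorum-miss probability is $P_{\text{miss}}^{\text{quorum}(i)}=\rho p_v+(1-\rho)p_v^K$, which follows from FR2--FR3 by conditioning on the common shock. The conditional independence needed is that the validators' verdicts depend on the defect only through FR3's model, not on how the agent produced it. Multiplying gives $\Pr(S_i)=p_i P_{\text{miss}}^{\text{quorum}(i)}$. For $U_i$, I treat $p_{\text{recovery}}$, as the paper does, as the collapsed probability that the bounded recovery chain of depth at most $R$ fails, conditional on detection. Under FR4 this is a parameter rather than something derived, so $\Pr(U_i)=p_i(1-P_{\text{miss}}^{\text{quorum}(i)})p_{\text{recovery}}$. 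The union bound then yields both inequalities. One subtlety is that a recovered stage could introduce a fresh defect during repair. I would handle this by noting that revalidation re-enters the Validation rule, so any such defect is itself subject to the quorum. Its miss mass I fold into $p_{\text{recovery}}$, which is the "refined below" caveat of the paper's setup. I expect this to be the main obstacle: the statement is only as precise as the collapsed definition of $p_{\text{recovery}}$. I would therefore state explicitly that $p_{\text{recovery}}$ denotes the probability that the recovery chain ends unresolved \emph{or} ships a missed defect. With that reading the silent bound may need the recovery-miss mass charged there instead, and I would choose the convention that keeps both displayed bounds as stated.

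For the behavioural half, I use the earlier observation that there is no enforced detection point, so no \textsc{Escalation} rule is ever reached. Hence $P_{\text{stall}}^{\text{beh}}=0$ and every introduced defect ships. $P_{\text{silent}}^{\text{beh}}\ge 1-\prod_{i\le N}(1-p_i')$ then follows from the noisy-OR cascade. Monotonicity in $N$ is immediate because each added factor $(1-p'_{N+1})\le 1$, and the bound tends to $1$ whenever $\sum_i p_i'=\infty$ (e.g.\ $p_i$ bounded below). Finally, the qualitative "conversion" sentence follows by comparing the two cases. Under enforcement, the caught mass $p_i(1-P_{\text{miss}})$ is routed to correct completion or to a halt with an audited payload (INV4), never to shipping. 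Under behaviour, that same mass ships.
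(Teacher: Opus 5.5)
Your route is the paper's own. You use Corollary~\ref{cor:noskip} as the premise that every stage meets the quorum, and split each stage's failure mass into a missed component $p_i P_{\text{miss}}^{\text{quorum}(i)}$ and a caught-but-unrepaired component $p_i(1-P_{\text{miss}}^{\text{quorum}(i)})\,p_{\text{recovery}}$. You then apply the union bound and let \textsc{Escalation} turn the latter into an audited halt. Behaviourally, you use the absence of any enforced detection point plus the noisy-OR cascade; your $\sum_i p_i'=\infty$ criterion for saturation is sharper than what the paper states.

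The weakness is that you present both event inclusions as consequences of the rule system. Each actually rests on a modelling convention that the paper states in its scope paragraph and you omit.
\begin{itemize}
\item The inclusion $\{\text{silent}\}\subseteq\bigcup_i S_i$ with $\Pr(S_i)=p_iP_{\text{miss}}^{\text{quorum}(i)}$ needs ``token issued $\Rightarrow$ every validator missed''. INV3 gives that only for blockers: under $\text{quorum}(k)$ with $k<N$, a token issues over warn-level flags. You must restrict to unanimous gating, as the paper does. Also, the token that matters is the one over the post-execute results, since a defect introduced during execution postdates the check the \textsc{Dispatch} rule performs.
\item The inclusion $\{\text{stall}\}\subseteq\bigcup_i U_i$ is false as you justify it. \textsc{Escalation} also fires on defect-free work, for example a misfiring absolute validator or spurious warns that exhaust $R$. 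You need the paper's convention that $P_{\text{stall}}$ counts defect-carrying tasks only.
\item A blocker-level catch escalates with no repair attempt at all. So $p_{\text{recovery}}$ must be read as $\Pr(\text{unresolved}\mid\text{caught})$, not as the failure rate of an attempted repair.
\end{itemize}

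The step that genuinely fails is your handling of recovery-introduced defects. A defect produced in a recovery round and missed on revalidation ships silently, so its mass belongs to $P_{\text{silent}}$. Yet $p_{\text{recovery}}$ does not appear in $\sum_i p_iP_{\text{miss}}^{\text{quorum}(i)}$, so no reading of it moves that mass into the silent sum. Folding it into $p_{\text{recovery}}$, as you suggest, only puts silent mass into the stall term and leaves the silent bound false. There is therefore no convention that ``keeps both displayed bounds as stated''.

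The displayed silent bound holds only under the collapsed-recovery hypothesis the paper adopts implicitly (``a single rate that collapses the nested recovery loop''): a caught defect ends either in a correct repair or in escalation, never in a missed re-defect. State that as an explicit assumption of the model rather than promising a convention.
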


\begin{corollary}[Benchmark invisibility of hard gates]
\label{cor:benchmark}
An autonomous, oracle-graded evaluation scores a stall as a failure, so it measures $P_{\text{silent}} + P_{\text{stall}}$ and is indifferent to the split between them. The governance value of a non-overridable gate --- driving $P_{\text{silent}}$ down at the cost of $P_{\text{stall}} > 0$ --- is therefore structurally invisible to such benchmarks: a hard gate can only lower a benchmark score, never raise it, whenever the stalled tasks include any the agent would have resolved.
\end{corollary}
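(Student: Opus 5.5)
The plan is to reduce the claim to a comparison of the two regimes' outcome distributions on the same task set, using only the definition of an autonomous oracle-graded evaluation and Proposition~\ref{prop:trade}. First I fix what the benchmark observes: each execution ends in exactly one of \emph{correct}, \emph{silent failure}, or \emph{stall}, and the oracle grader sees only the final artifact. A stall produces no accepted artifact, or one the oracle rejects, and there is no human present to resolve the \textsc{Escalation} halt. So the grader's score is the indicator of \emph{correct}, and the expected failure rate is $1 - P_{\text{correct}} = P_{\text{silent}} + P_{\text{stall}}$. This is a function of the sum alone. Any two regimes with equal sums receive equal scores however they split that mass, which is the indifference claim.

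For the second claim I would couple the two regimes on each task. Take the same agent with the same generation randomness (FR1, FR4), and compare behavioural delivery with enforced delivery that adds a non-overridable gate. By Corollary~\ref{cor:noskip} and the \textsc{Escalation} rule, the gate only ever \emph{removes} paths: it never dispatches anything the ungated agent would not have produced, since it adds no generation capability. So on any task, the gated run either reproduces the ungated outcome or halts. The structural regime also has recovery, and recovery could in principle convert an ungated silent failure into a correct outcome. I would therefore state the comparison for the gate in isolation: the same enforced pipeline with and without the hard stop. Under that comparison the per-task score of the gated version is at most the ungated score, so the expected benchmark score can only fall.

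Strictness then follows by conditioning on the stalled set. The score drops strictly exactly when some stalled task would have been resolved correctly without the gate, that is, with positive probability the gate halts on a task that would otherwise end \emph{correct}. This is the stated hypothesis. Meanwhile, halting an otherwise-silent-failure task moves mass from $P_{\text{silent}}$ to $P_{\text{stall}}$ with zero change to the score. Proposition~\ref{prop:trade} identifies this as the governance value, and the first claim shows it is invisible to the grader.

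The main obstacle is making the coupling honest. The gate must be isolated from recovery and revalidation, or else ``only lower'' is false, since recovery can raise correctness. I would handle this by defining the counterfactual as the same trace with the \textsc{Escalation} halt replaced by dispatch-anyway, and then argue pathwise that the replacement affects only traces that reach a halt.
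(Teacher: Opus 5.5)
Your proposal is correct and follows the reasoning the paper builds into the corollary itself. The paper gives no separate proof: stalls score as failures, so only $P_{\text{silent}}+P_{\text{stall}}$ is observed, and a halt can only turn correct or silent outcomes into stalls; your pathwise coupling is a more careful version of that argument. Isolating the hard stop from recovery, with dispatch-anyway as the counterfactual, is the reading the paper relies on when it runs the protocol arm in advisory configuration, and it is necessary, since the paper's own SWE-bench results show recovery raising the score.
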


Corollary~\ref{cor:benchmark} delimits what any autonomous benchmark, including our own evaluation (Section~\ref{sec:swebench}), can and cannot establish: it can measure whether an \emph{executed, validated process} improves outcomes, but the human-in-the-loop value of hard gates requires deployment data, not benchmarks (Section~\ref{sec:discussion}).

\textbf{Capability floor.} FR4 ties recovery to the agent: a recovery round is a re-attempt by the same agent with defect rate $p_a$, and successive attempts are correlated (same model, same blind spots), which we model with a second common shock $\rho_a$: over $R$ rounds,
$$p_{\text{recovery}}^{(R)} = \rho_a\, p_a + (1-\rho_a)\, p_a^R.$$
Substituting into the per-stage bound gives the \emph{advantage factor} of structural enforcement over the behavioural per-stage rate $p_a$:
$$A(p_a) \;=\; \frac{p_a}{P_{\text{stage}}} \;=\; \Big[\, P_{\text{miss}}^{\text{quorum}} + \big(1 - P_{\text{miss}}^{\text{quorum}}\big)\, p_{\text{recovery}}^{(R)} \,\Big]^{-1}.$$

\begin{corollary}[Capability floor]
\label{cor:floor}
$A(p_a)$ is decreasing in $p_a$ and $A(p_a) \to 1$ as $p_a \to 1$: as the agent's base failure rate approaches one, $p_{\text{recovery}}^{(R)} \to 1$ and the structural advantage vanishes. Enforcement multiplies the value of a capable agent; it cannot substitute for one.
\end{corollary}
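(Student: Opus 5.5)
The plan is a direct calculus argument on the closed form of $A(p_a)$, treating $P_{\text{miss}}^{\text{quorum}}$, $\rho_a\in[0,1]$ and $R\geq 1$ as fixed; by FR2--FR3 the quorum-miss rate depends only on the validators, not on $p_a$. Write $q = P_{\text{miss}}^{\text{quorum}}\in[0,1]$ and $g(p_a) = p_{\text{recovery}}^{(R)} = \rho_a p_a + (1-\rho_a)p_a^R$. Then $A(p_a) = [\,q + (1-q)\,g(p_a)\,]^{-1}$, and its denominator $h(p_a) = q + (1-q)g(p_a)$ is an affine function of $g$ with nonnegative slope $1-q$. The whole corollary therefore reduces to statements about $g$.

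\textbf{Step 1 (monotonicity of $g$).} Differentiate: $g'(p_a) = \rho_a + (1-\rho_a)R\,p_a^{R-1} \geq 0$ on $[0,1]$. The derivative is strictly positive except in the degenerate case $\rho_a=0$, $p_a=0$, $R>1$. So $g$ is nondecreasing, and strictly increasing on $(0,1]$. It follows that $h$ is nondecreasing, and strictly increasing whenever $q<1$. Since $A = 1/h$ with $h>0$ (see Step 3), $A$ is nonincreasing, and strictly decreasing when $q<1$. I will state this condition explicitly. When $q=1$ the validators never catch anything, $A\equiv 1$, and the advantage was never present. "Decreasing" in the statement should be read as nonincreasing in general and strictly decreasing for any nontrivial quorum.

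\textbf{Step 2 (limit).} By continuity, $g(p_a)\to \rho_a + (1-\rho_a) = 1$ as $p_a\to 1$. Hence $h\to q + (1-q) = 1$ and $A\to 1$. This is exactly the claimed statement that $p_{\text{recovery}}^{(R)}\to 1$ and the advantage vanishes. The interpretive sentence ("cannot substitute for one") is a reading of this limit together with $A\geq 1$. The bound $A \geq 1$ follows because $h\leq q+(1-q)=1$ whenever $g\leq 1$.

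\textbf{Step 3 (well-definedness; the only real obstacle).} I need $h(p_a)>0$, so that $A$ is finite and the reciprocal reverses monotonicity. The delicate point is the corner $q=0$, $p_a\to 0$. There $h\to 0$ and $A\to\infty$. The definition $A = p_a/P_{\text{stage}}$ is then a $0/0$ form. I will handle this by restricting to $p_a\in(0,1]$, where $g>0$ and hence $h>0$ for every $q$. I will also note that with $q>0$ the expression extends continuously to $p_a=0$. A secondary care point is the identity $p_a/P_{\text{stage}} = 1/h$ itself: it uses $P_{\text{stage}} = p_a\,[\,q + (1-q)p_{\text{recovery}}\,]$ from the per-stage decomposition with $p_i=p_a$. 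I will verify this by factoring $p_a$ out of the two summands before dividing.
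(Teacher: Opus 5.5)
Your proposal is correct and follows the paper's own argument. The paper gives no separate proof; it relies only on the observation embedded in the statement that $p_{\text{recovery}}^{(R)} = \rho_a p_a + (1-\rho_a)p_a^R$ increases to $1$ as $p_a \to 1$, so the bracket $P_{\text{miss}}^{\text{quorum}} + (1-P_{\text{miss}}^{\text{quorum}})\,p_{\text{recovery}}^{(R)}$ increases to $1$ and its reciprocal decreases to $1$. Your extra care is sound and tightens what the paper leaves informal: strict monotonicity holds only when $P_{\text{miss}}^{\text{quorum}}<1$, you restrict to $p_a\in(0,1]$ to avoid the $0/0$ in $p_a/P_{\text{stage}}$, and you establish the bound $A\ge 1$.
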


The corollary predicts a floor on base capability below which enforcement yields no measurable benefit --- recovery rounds re-sample the same failure distribution, so a quorum that reliably catches defects merely converts them into unrecovered stalls. Section~\ref{sec:swebench} observes exactly this collapse on a weak coder ($\approx$15\% base resolve rate), with the enforcement loop demonstrably engaging yet producing no gain: the model's prediction, then the observation.

\textbf{Scope and validity.} The analysis is single-gate and conservative: the deployed protocol composes multiple independent gates (deterministic predicates, pre- and post-execution phases, per-mode gating, an orthogonal quorum), so the advantage compounds to the extent gates fail independently. The simplifying assumptions relax without changing the conclusion's direction: heterogeneous validators are bounded by substituting the worst per-validator rate $\max_j p_{v_j}$ for $p_v$ (the mixture derivation is unchanged); drifting per-stage rates are absorbed because the bounds are per-stage sums and products, so a time-varying $p_i(t)$ is bounded by its per-stage supremum; and an unstable correlation $\rho$ is handled by evaluating the bound at its upper envelope, since $P_{\text{miss}}^{\text{quorum}}$ is monotone in $\rho$. Two scope choices are explicit. The quorum-miss formula analyses the strictest gating ($\text{unanimous}$, $k = N$: a defect ships only if every validator misses); weaker thresholds $\text{quorum}(k)$, $k < N$, raise $P_{\text{miss}}^{\text{quorum}}$ monotonically, and the resulting policy trade-off is characterised empirically in Section~\ref{sec:policy-monte-carlo}. And $P_{\text{stall}}$ here counts defect-\emph{carrying} tasks only; stalls on defect-free work --- false blocks, including a misfiring absolute validator --- are a property of validator precision and policy strictness, measured separately in the same section, and the two stall costs add in deployment. Each substitution weakens the bound quantitatively but preserves it; the model bounds the average case rather than any specific deployment, and the bounds degrade gracefully rather than invert --- the ratio exceeds unity across the modelled range (Section~\ref{sec:detection}) --- narrowing as $\rho \to 1$, as $p_a$ approaches the floor of Corollary~\ref{cor:floor}, and as the behavioural rate saturates at large $N$. The structural difference is robust: enforcement bounds silent failure at a weighted product of agent and validator rates, while behavioural compliance permits cumulative, near-saturating growth.

\subsection{Enforcement Invariants}

The implementation must maintain INV1--INV5:

\textbf{INV1} (Token integrity):
\begin{equation*}
\begin{aligned}
\forall \token \in \text{issued\_tokens}.\ &\neg\text{can\_forge}(\token) \\
&\land \text{verify}(\token.\text{validator\_sigs})
\end{aligned}
\end{equation*}

\textbf{INV2} (Capability boundary):
\begin{equation*}
\begin{aligned}
\forall a \in \text{active\_roles},\ &t \in \text{Tools}. \\
\text{can\_call}(a, t) &\Rightarrow t \in \text{current\_mode.tools}
\end{aligned}
\end{equation*}

\textbf{INV3} (Non-overridable blockers):
\begin{equation*}
\begin{aligned}
\forall \text{results}.\ &(\exists r \in \text{results}.\\
&\quad r.\text{severity} = \text{blocker})\\
&\Rightarrow \text{issue\_token}(\text{results}, \_) = \bot
\end{aligned}
\end{equation*}

\textbf{INV4} (Audit completeness):
\begin{equation*}
\begin{aligned}
\forall \text{op} \in \{&\text{dispatch}, \text{validate},\\
&\text{transition}\}.\\
\text{execute}(\text{op}) &\Rightarrow \exists \text{log\_entry} \in \text{audit\_log}
\end{aligned}
\end{equation*}

\textbf{INV5} (Session resumability):
\begin{align*}
\forall c \in \text{checkpoints}. \text{resume}(c) &\Rightarrow \\
&\quad \text{preserve}(c.\text{decisions}) \\
&\quad \land \text{preserve}(c.\text{memory\_summary}) \\
&\quad \land \neg\text{rerun}(\text{completed\_tasks}(c))
\end{align*}

\subsection{The Enforcement Guarantee}
\label{sec:theorem}

The well-formedness conditions, operational semantics, and invariants combine into the language's central formal property: for a well-formed protocol, the invariants are not aspirations the implementation should uphold but consequences of the rule system --- no execution path exists on which they fail. The guarantee is conditional on three assumptions, each discharged by a specific implementation mechanism (Table~\ref{tab:claim-mapping}) and each stated explicitly because it marks a boundary of the formal claim.

\begin{assumption}[Unforgeable signing]
\label{as:crypto}
$\text{sign}(\cdot)$ is computable only by validators: agents cannot produce $\text{validator\_sigs}$ that $\text{verify}(\cdot)$ accepts. (Implementation: HMAC-signed single-use tokens; agents hold no signing key.)
\end{assumption}

\begin{assumption}[Boundary integrity]
\label{as:boundary}
Agents can invoke only tools exposed to their current mode: tool exposure is enforced at infrastructure level, below the agent. (Implementation: disjoint tool sets at the MCP boundary.)
\end{assumption}

\begin{assumption}[Engine mediation]
\label{as:mediation}
Every operation in $\{\text{validate}, \text{issue\_token}, \text{dispatch}, \text{spawn}, \text{halt}\}$ and every human mode action executes through the engine, which appends an audit entry for each. (Implementation: event-sourced, hash-chained audit log.)
\end{assumption}

\begin{theorem}[Enforcement soundness]
\label{thm:enforcement}
Let $\protocol = \langle M^*, V^*, D, C^* \rangle$ satisfy WF1--WF3. Under Assumptions~\ref{as:crypto}--\ref{as:mediation} and Axiom~\ref{ax:mode}, every engine trace of $\protocol$ satisfies INV1--INV4, and the guarantee is closed under Kleene composition: it holds for every execution in $L^*$.
\end{theorem}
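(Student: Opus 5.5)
We argue by induction on the length of an engine trace, treating a trace as a finite sequence of rule applications from the operational semantics (Validation, Token Issuance, Dispatch, Recovery, Escalation), each starting from a state $\sigma$. We prove that every state reachable along a trace satisfies INV1--INV4, and that every rule application preserves them. In the base case, the initial state has $\text{issued\_tokens} = \emptyset$, an empty audit log and no operation yet executed. INV1, INV3 and INV4 therefore hold vacuously. INV2 holds because $\text{can\_call}$ is fixed by Assumption~\ref{as:boundary} to the tools of $\sigma.\text{current\_mode}$.

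The inductive step splits by invariant, with a case analysis over the last rule applied.

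\emph{INV1.} Only the Token Issuance (Success) rule adds to $\text{issued\_tokens}$, and it does so with $\text{sign}(\text{results})$ computed by validators. By Assumption~\ref{as:crypto} no agent action yields an accepted signature, so every token in the set verifies and none is forged.

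\emph{INV2.} By Axiom~\ref{ax:mode}, $\sigma.\text{current\_mode}$ is constant over the trace. By Assumption~\ref{as:boundary}, tool exposure equals that mode's tool set. WF1 makes these sets disjoint across modes, so no tool from another mode is callable. Artifacts crossing a boundary carry no capabilities, so they cannot enlarge $\text{can\_call}$.

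\emph{INV3.} We first show that the policy layer maps any blocker-carrying result set to $\text{hard\_stop}\neq\text{proceed}$; this is the fixed precedence order stated with the disagreement policies. The Refusal rule then forces $\bot$. Escalation halts, Recovery requires that no blocker be present, and no rule dispatches on $\bot$ (the dispatch-error rule). Hence the only way to reach $\text{execute}$ is through a valid token, and that token was issued over blocker-free results. WF3 ensures that every dispatch has an associated validator set, so the token precondition is never vacuous.

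\emph{INV4.} This follows directly from Assumption~\ref{as:mediation}: each rule, together with each human mode action (the ``transition'' of INV4, which Axiom~\ref{ax:mode} makes an external audited action), runs through the engine and appends an entry.

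For Kleene closure we induct on the structure of $L^*$. $L^0$ is the empty execution and satisfies everything vacuously. For the composition step, the only rule that creates a new instance is Recovery. It spawns $L'$ under the identical tuple $\langle M^*,V^*,D,C^*\rangle$ and the identical $\sigma.\text{current\_mode}$, so the single-instance argument above applies to $L'$ verbatim. Sequential composition also preserves the invariants, because each is a universally quantified safety property over states and operations. Afterwards the parent resumes only through $\text{revalidate}(t)$, which re-enters at the Validation rule. The depth counter $d(t)\le R$ guarantees that each recursion ends in either revalidation or Escalation, so nested composition produces no path that bypasses issuance. Tokens issued inside $L'$ are scoped to their own dispatch (Section~\ref{sec:sessions}), so they cannot authorise the parent's task.

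We expect INV3 with recovery interleaving to be the main obstacle. We must rule out two bypasses: a token from a sub-instance authorising the parent's dispatch, and a token surviving a $\text{revalidate}$ whose new results contain a blocker. We would handle this by strengthening the induction hypothesis to a statement about all $\text{execute}(t)$ events: each is immediately preceded by a $\text{dispatch}(t,\token)$ with $\token.\text{task\_ref}=t$ and $\token$ issued from the most recent validation of $t$, whose results are blocker-free. This is the form from which Corollary~\ref{cor:noskip} would then follow.
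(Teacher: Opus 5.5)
Your proposal is correct and follows essentially the paper's own argument: induction on trace length with a case analysis over the rules of Section~\ref{sec:semantics}, with Assumptions~\ref{as:crypto}--\ref{as:mediation} and Axiom~\ref{ax:mode} discharging INV1, INV2 and INV4, the blocker-forces-$\text{hard\_stop}$ precedence giving INV3, and the \textsc{Recovery} rule's identical tuple and mode, with re-entry through $\text{revalidate}(t)$, closing the guarantee under $L^*$. Three of your choices are refinements rather than a different route: organising the step by invariant rather than by rule, not invoking WF4 (the theorem does not assume it, though the paper's sketch mentions it), and strengthening the hypothesis on $\text{execute}(t)$ events so that Corollary~\ref{cor:noskip} follows directly. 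The one overstatement is that $d(t)<R$ bounds a single task's recovery chain, not the nesting depth of $L'$, since spawned subtasks start at depth $0$; this is harmless, because INV1--INV4 are safety properties and need no termination argument.
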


\begin{proof}[Proof sketch]
By induction on trace length, with a case per rule of Section~\ref{sec:semantics}. The empty trace satisfies INV1--INV4 vacuously. For the inductive step, assume the invariants hold of the trace so far and consider the next rule applied. \textsc{Validate} issues no token and dispatches nothing; INV4 holds by Assumption~\ref{as:mediation}. \textsc{Issue-Token (Success)} creates a token carrying $\text{sign}(\text{results})$; by Assumption~\ref{as:crypto} no other rule (and no agent) can mint one, preserving INV1. Its premise $\text{apply\_policy}(D, \text{results}) = \text{proceed}$ is evaluable on all outcomes by WF4 as declared. \textsc{Issue-Token (Refusal)} returns $\bot$ whenever the policy outcome is not \text{proceed}; since a blocker forces \text{hard\_stop}, no issuance rule is applicable in the presence of a blocker --- INV3 is immediate, and no rule ordering can circumvent it. \textsc{Dispatch} fires only with a valid token and only when dispatch is among $\text{current\_mode.tools}$; the first premise means a validation preceded every execution (WF3 guarantees the validator set is nonempty), and the second, with Axiom~\ref{ax:mode} (mode constancy) and Assumption~\ref{as:boundary}, preserves INV2. \textsc{Dispatch} on $\bot$ yields an error and executes nothing. \textsc{Recovery} spawns $L'$ under the same tuple $\langle M^*, V^*, D, C^* \rangle$ and the same mode, so the induction hypothesis applies to $L'$ intact, and the parent resumes only through \textsc{Validate} --- this is the composition step that closes the guarantee under $L^*$ (Section~\ref{sec:kleene}); the budget premise $d(t) < R$ additionally bounds any single task's recovery chain, so a task cannot cycle between recovery and revalidation indefinitely --- it proceeds, or escalates by $d(t) = R$. \textsc{Escalation} halts the trace; nothing dispatches. Every case appends an audit entry by Assumption~\ref{as:mediation}, preserving INV4. INV5 is a cross-trace property of the checkpoint subsystem rather than of the rule system, and is treated separately (Section~\ref{sec:sessions}).
\end{proof}

\begin{corollary}[Non-skippability]
\label{cor:noskip}
In every trace of a well-formed protocol, every $\text{execute}(t)$ is preceded by a $\text{validate}$ step whose results issued the gating token. Validation cannot be skipped, reordered after execution, or overridden by any agent decision.
\end{corollary}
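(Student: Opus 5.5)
The plan is to derive Corollary~\ref{cor:noskip} from Theorem~\ref{thm:enforcement} by inspecting which rules of Section~\ref{sec:semantics} can produce an $\text{execute}(t)$ event, and then walking the token's provenance backwards. First I will show that \textsc{Dispatch with Token Enforcement} is the only rule whose conclusion contains $\text{execute}(\cdot)$. \textsc{Validate}, both issuance rules, the $\bot$-dispatch rule (which yields only $\text{error}$), \textsc{Recovery} and \textsc{Escalation} never execute. Assumption~\ref{as:mediation} routes every operation through the engine, so no execution can occur outside the rule system. Hence any $\text{execute}(t)$ in a trace is immediately preceded by a $\text{dispatch}(t,\token)$ with $\text{valid}(\token)$ and $\token.\text{task\_ref}=t$.

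Second, I trace $\token$ back to its origin. By INV1, which holds on the trace by Theorem~\ref{thm:enforcement}, together with Assumption~\ref{as:crypto}, a token that verifies was minted by \textsc{Issue-Token (Success)}. That rule consumes a results set satisfying $\text{apply\_policy}(D,\text{results})=\text{proceed}$, and such a results set arises only as the conclusion of \textsc{Validate} over $V_p$. WF3 guarantees that $V_p$ is nonempty for the dispatch operation, so the validation step is a genuine evaluation and not a vacuous one.

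Third comes ordering. Issuance takes validation output as its premise, and dispatch takes the issued token as its premise. The trace order therefore places $\text{validate} \prec \text{issue\_token} \prec \text{dispatch} \prec \text{execute}$. Validation cannot be reordered after execution, and it cannot be skipped.

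Fourth, I exclude reuse and override. Tokens are single-use and scoped to one dispatch within the issuing trace (Section~\ref{sec:sessions}, Axiom~\ref{ax:mode}), so a token cannot come from an earlier trace or mode, or from a stale validation. By INV3, a blocker forces $\bot$ whatever any agent decides. Under \textsc{Recovery}, the parent re-enters through $\text{revalidate}(t)$, so a fresh validation precedes any later execution of $t$. Kleene closure (Theorem~\ref{thm:enforcement}) extends the argument to spawned instances by the same case analysis.

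The main obstacle is the second step: establishing that "valid" tokens are exactly those produced by the success rule for the same $t$. This binding is only informal in the paper. I would make it precise by reading $\text{valid}(\token)$ as $\text{verify}(\token.\text{validator\_sigs}) \land \token.\text{task\_ref}=t \land \token$ unused. I would then argue via INV1 and Assumption~\ref{as:crypto} that verification implies issuance by the success rule on the results it signs.
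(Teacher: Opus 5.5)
Your proposal is correct and takes essentially the paper's route. The paper reads the corollary off the \textsc{Dispatch} case of the trace-induction proof of Theorem~\ref{thm:enforcement}, where the valid-token premise, Assumption~\ref{as:crypto} (no agent can mint a token), WF3 (a nonempty validator set), INV3, and \textsc{Recovery} re-entering through validation combine into the same backward-provenance chain you describe; the token-to-issuance binding you flag is equally informal in the paper's sketch, and its Lean mechanisation handles it in the spirit of your proposed reading of $\text{valid}(\token)$ by realising Assumption~\ref{as:crypto} structurally, so that a token over blocker-containing results is unconstructible in the model.
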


Corollary~\ref{cor:noskip} is the formal content of the ``process non-determinism eliminated'' claim of Section~\ref{sec:bounded-nondeterminism}, and the property the end-to-end evaluation (Section~\ref{sec:swebench}) probes empirically.

\textbf{Scope and honesty of the claim.} The theorem is deliberately conditional. It requires only WF1--WF3, which the verifier enforces unconditionally at load; WF4 and WF5 contribute no premise because their enforcement is partial (Section~\ref{sec:language}): WF4 admits sub-quorum policies with a warning, and WF5 checks declaration consistency, not logical satisfiability of constraint expressions. Consequently the theorem guarantees that validation \emph{occurs} and blockers \emph{halt}; it does not guarantee that the declared policy is sensible or the constraint set consistent --- a protocol author can still write a bad process, well-formed. Nor does it bound the \emph{quality} of validator judgment, which is the failure-rate model's subject (Section~\ref{sec:failure-bounds}). The assumptions likewise delimit the trust base: a compromised signing key voids INV1, and an agent with out-of-band tool access voids INV2 --- enforcement is structural exactly down to the infrastructure that implements it. Beyond the type-system analogy, the shape of the guarantee is that of noninterference results in language-based security~\cite{capability1974}: a syntactic discipline, checked mechanically, yielding a semantic property over all executions. Property-based testing corroborates the theorem's cases against the implementation: 17 property tests targeting INV1, INV4, WF1, severity-cap monotonicity, and the policy-halt behaviour, run at a budget of 10{,}000 randomised examples per test, with zero violations (Section~\ref{sec:implementation}). The core of the argument is additionally mechanised: a self-contained Lean~4 development (no external libraries, no unproven obligations) models the seven rules as an inductive step relation and proves INV3, INV2, and Corollary~\ref{cor:noskip} by trace induction, with Assumption~\ref{as:crypto} realised structurally --- a token over blocker-containing results is unconstructible in the model, not merely forbidden. The mechanisation covers the single-instance rule system (the Kleene composition step and INV4/INV5 are paper-only) and accompanies the artifact.

\subsection{The 2+N Pattern as Reference Configuration}
\label{sec:2pn}

We formalise the 2+N team pattern (two human-in-control, or HI-CTRL, roles plus N specialised agent members) as a reference protocol. Figure~\ref{fig:two_plus_n} shows the architecture with mode separation, role distribution, and tool boundaries.

\begin{figure}[htbp]
  \centering
  \includegraphics[width=\columnwidth]{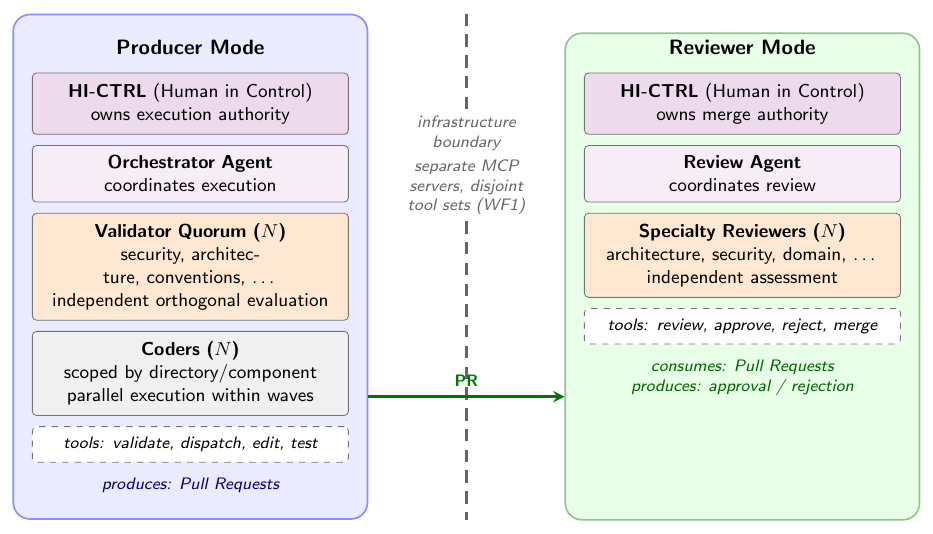}
  \caption{The 2+N team pattern: the ``2'' is the two human-in-control (HI-CTRL) roles, one per mode; the ``N'' is the specialised agents (validators, coders, reviewers). Producer and reviewer modes are separated by an infrastructure boundary --- separate MCP servers with disjoint tool sets --- so capability separation is enforced by WF1, not by convention; the only value that crosses is the PR artifact (Axiom~\ref{ax:mode}).}
  \label{fig:two_plus_n}
\end{figure}

\begin{lstlisting}[style=yaml, basicstyle=\footnotesize\ttfamily,
  aboveskip=3pt, belowskip=3pt,
  caption={The 2+N protocol as shipped (excerpt of the executable template
  \texttt{2+n.yml}; validator criteria abridged).},
  label={lst:2pn}, float=tbp, floatplacement=tbp]
protocol_id: "2+n"
modes:
  - mode_id: "producer"      # cannot merge or approve (WF1)
    tools: [edit, dispatch, resolve, test, validate]
    validators: [security, architecture, conventions,
                 quality, protocol_adherence, meta-spec]
    transitions: {complete: reviewer}
  - mode_id: "reviewer"      # cannot edit or dispatch (WF1)
    tools: [review, approve, merge, pr]
    validators: [security, architecture, quality, meta-spec]
    transitions: {approved: complete, rejected: producer}
validators:
  - validator_id: "security"
    evaluation_phase: "pre_execute"
    criteria:
      - "Check the task spec against OWASP Top 10 ..."
      - "Verify no hardcoded credentials, secrets, ..."
  - validator_id: "quality"
    evaluation_phase: "post_execute"
    # ... architecture, conventions, protocol_adherence,
    #     meta-spec elided
disagreement_policy: "quorum"
\end{lstlisting}

\textbf{Why this structure:}

\begin{itemize}
\item \textbf{Producer-reviewer separation} (WF1): producer mode cannot access merge tools, reviewer mode cannot access edit tools. This formalises the classical Separation of Duties principle~\cite{botha2001sod,nist80053} and prevents self-approval.
\item \textbf{Validator quorum}: validators with orthogonal criteria operate independently; unanimous agreement provides confidence, disagreement signals ambiguity requiring human judgment.
\item \textbf{Non-overridable blockers} (INV3): any blocker stops work regardless of other validator opinions.
\item \textbf{Human-in-control roles}: the two humans are first-class members with execution authority, ambiguity resolution, disagreement adjudication, and veto power. The ``2'' follows from the modes, not the other way around: a two-mode producer--reviewer process needs a human in control on each side of the boundary. The language admits the degenerate configurations explicitly rather than pretending: one person may hold the HI-CTRL role in both modes --- the boundary remains structurally enforced (tools stay disjoint; only the artifact crosses) but the separation of duties becomes nominal, a declared trade rather than a hidden one --- or a solo developer encodes a 1+N protocol with no review mode at all (producer through merge), accepting explicitly that the separation is gone (Section~\ref{sec:expressiveness}).
\item \textbf{Agent roles}: the N agents are first-class members with defined responsibilities within their capability boundaries.
\end{itemize}

\textbf{Scaling beyond 2+N.} The 2+N pattern is the smallest configuration in which Separation of Duties is structurally enforced rather than absent; the language places no upper bound on modes. Typical enterprise SDLC processes exhibit four to five distinct modes: planning, production, review, deployment, and operations, each enforcing structural separation of a different concern. Smaller teams may collapse modes (1+N for solo developers); larger organisations may decompose further. The language treats all configurations uniformly through the same mode/role/validator primitives, and we anticipate additional modes (specification authorship, compliance verification, post-deployment learning) as AI-SDLC practices mature.

\subsection{Expressiveness: Four Processes, One Language}
\label{sec:expressiveness}

A specification language earns its keep by what it can distinguish. We demonstrate expressiveness with four structurally different working processes, all shipped as executable templates with the reference implementation and all running unmodified on the same engine (Table~\ref{tab:expressiveness}). They are not toys: two are the production configurations of the implementation's own development, and two are the protocols of the end-to-end evaluation (Section~\ref{sec:swebench}).

\textbf{Solo (1+N).} A single producer mode holds the full tool set, edit through merge: Separation of Duties is \emph{deliberately} collapsed, and the governance burden shifts entirely to the validator quorum (four orthogonal validators plus a meta-level protocol-adherence validator). This is the language's lower bound: a solo developer keeps structural validation and audit without a second human, and the protocol says so explicitly rather than implicitly.

\textbf{Team (planner/producer/reviewer).} Three modes with disjoint tool sets: planning (\texttt{assess}, \texttt{plan}) gated by intent validators (clarity, scope, value-increment, agile conformance); production (\texttt{edit}, \texttt{dispatch}, \texttt{test}) unable to merge; review (\texttt{review}, \texttt{approve}, \texttt{merge}) unable to edit. This is the 2+N pattern of Section~\ref{sec:2pn} extended with an upstream planning mode --- the shape of a conventional enterprise SDLC, with WF1 enforcing at infrastructure level the separations that code review policies enforce by convention.

\textbf{Bug-fix surgeon (task-tailored minimality).} The protocol of the end-to-end evaluation. Its pre-execute validator is a deliberate \emph{pass-through} --- declared only to satisfy WF3, so the raw issue reaches the coder without inducing specification authoring --- and the enforcement lives post-execute: a minimality validator (severity cap \texttt{warn}) rejects sprawling or out-of-scope diffs and routes violations to recovery for a tighter fix, while a blocker-level global constraint (\texttt{no\_secrets\_in\_diff}) stays non-overridable. The protocol encodes a lesson the evaluation taught: a generic elaboration-inducing process \emph{reduced} bug-fix resolution, and the fix was to invert the enforcement target --- in configuration, not in code.

\textbf{Feature warden (containment over long horizons).} The surgeon's inversion, for the task type that fails the opposite way. Features fail by under-implementation and scope drift, so the pre-execute validator \emph{engages} structure (warn until the feature is decomposed into components with acceptance behaviour), and the post-execute containment validator checks completeness, rejects out-of-scope edits, and treats regression of previously working behaviour as its central rejection criterion (routed through recovery as shipped; a deterministic blocker-level regression gate over the previously-passing test set is designed for the feature-development study of Section~\ref{sec:discussion}). Minimality and completeness are opposite enforcement targets; the language expresses both as validator criteria plus severity policy, and switching between them is a template swap.

\begin{table}[htbp]
\centering
\caption{Four processes in one language. Each row is an executable template; ``target'' is what the post-execute enforcement optimises.}
\label{tab:expressiveness}
\small
\begin{tabular}{@{}lllll@{}}
\toprule
Protocol & Modes & SoD & Pre-execute & Target \\
\midrule
Solo (1+N) & 1 & collapsed & quorum (4+meta) & quality \\
Team & 3 & enforced (WF1) & intent gates & separation \\
Surgeon & 1 & n/a & pass-through & minimality \\
Warden & 1 & n/a & decomposition & completeness \\
\bottomrule
\end{tabular}
\end{table}

The comparison with adjacent formalisms is now concrete rather than positional. \emph{Prompts} can state all four policies --- and Section~\ref{sec:swebench} measures what that is worth: the same policy as prose moved nothing. \emph{Workflow and business-process languages} (BPMN and its human-agentic extensions) can express the team pipeline's sequencing and human tasks, but have no vocabulary for the parts doing the work here: infrastructure-level capability boundaries per mode, validator quorums with a severity algebra and pluggable disagreement policies, unforgeable evidence gating dispatch, or a declared severity cap that turns the same validator from a hard gate into a recovery trigger. \emph{Framework-embedded SOPs} (MetaGPT-style) encode roles in agent code, so the surgeon-to-warden inversion --- which here is a YAML swap under an unchanged engine, theorem intact --- would be a code change, and their gates bind by convention rather than by construction. The four templates are the policy/mechanism separation made concrete: one mechanism, four policies, and the guarantee of Theorem~\ref{thm:enforcement} holds for each because well-formedness, not intent, is what the verifier checks.

\section{Implementation}
\label{sec:implementation}

We have implemented this specification in a working system, demonstrating that the formal language is executable and the invariants maintainable.

\textbf{Architecture}:
\begin{itemize}
\item Protocol engine parses specifications and enforces well-formedness conditions.
\item Two MCP servers (producer-side, reviewer-side) implement mode separation (WF1).
\item Token subsystem implements validation token issuance and verification (INV1).
\item Capability boundaries enforced through tool exposure at infrastructure level (INV2).
\item Event-sourced audit log maintains immutable operation records (INV4).
\item Session checkpoint subsystem maintains resumable state across context boundaries (INV5).
\end{itemize}

\subsection{Architectural Properties}

Several properties emerged during implementation that strengthen the specification. The engine executes exactly one mode per invocation, halting at mode boundaries rather than auto-transitioning; modes communicate only through artifacts and mode changes are explicit human actions, enforcing human-in-control governance --- this is the architectural realisation of Axiom~\ref{ax:mode}, on which Theorem~\ref{thm:enforcement} rests. On any halt, the engine emits a structured, machine-parseable payload (halt type, blocking validators, justifications, policy decision) alongside human-readable output, enabling an external orchestrator to interpret a refusal and revise-and-retry or escalate. Validators are resolved through an open registry: third parties implement a single \texttt{evaluate(context)} interface and register a custom validator type without modifying engine code (the reference implementation includes a worked example). This extensibility also admits \emph{meta-level} validators registered like any other: the shipped protocol-adherence validator judges whether submitted work fits the current mode's declared profile, and the same interface admits validators over the audit trail itself (checking that validators are called when required, mode boundaries are respected, and dispatches token-gated) --- protocol self-auditing is expressible as an ordinary validator definition rather than a special construct. Validator severities form a total order ($\text{pass} < \text{warn} < \text{blocker}$); a severity cap lets validators-under-evaluation participate without blocking, with every cap recorded in the audit log.

\begin{table}[htbp]
\centering
\caption{Structural claims and their enforcement mechanism.}
\label{tab:claim-mapping}
\small
\begin{tabular}{@{}ll@{}}
\toprule
\textbf{Invariant / condition} & \textbf{Enforcement mechanism} \\
\midrule
INV1 (token unforgeability) & JWT HMAC, single-use, TTL \\
INV2 (capability boundary) & disjoint tool sets at MCP boundary \\
INV4 (audit completeness) & hash-chained audit log \\
INV5 (session resumability) & file-based checkpoint \\
WF1 (tool disjointness) & compiler verifier \\
WF3 (validator coverage) & pre-execute validator requirement \\
\bottomrule
\end{tabular}
\end{table}

\textbf{Self-development validation}: we have used the system to extend its own codebase under the 2+N protocol, providing initial evidence the approach is viable for real software development. \textbf{Property-based validation}: we verified audit-chain integrity (INV4), token unforgeability and single-use consumption (INV1), WF1 tool disjointness, severity-cap monotonicity, and the policy halt invariant against randomised inputs --- 17 property tests at a budget of 10{,}000 examples each (small discrete input spaces exhaust earlier), completing with zero violations in 17 minutes on a commodity development machine. The implementation is validated by 2{,}284 tests across three layers: unit/integration, end-to-end CLI journeys via subprocess, and property-based randomised testing of the core invariants; measured line coverage over the full codebase --- including operational tooling well outside the enforcement core --- is 68\%. A controlled end-to-end comparison against behavioural-instruction and bare baselines on real tasks is presented in Section~\ref{sec:swebench}; broader failure-mode analysis and quantification of human intervention rates remain future work.

\section{Empirical Evaluation}
\label{sec:evaluation}

The following studies characterise the enforcement properties of the specification language using the reference implementation as a test bed.

\subsection{Policy Monte Carlo: Disagreement Policy Trade-offs}
\label{sec:policy-monte-carlo}

We characterised the four disagreement policies via Monte Carlo simulation (25{,}000 trials per point, validator accuracy swept $p \in [0.5, 1.0]$, $N = 3$ validators), driving the actual \texttt{PolicyEvaluator} with simulated validator outputs drawn from real severity distributions.

\textbf{False-block / false-pass trade-off.} Figure~\ref{fig:policy_tradeoff} plots the false-block rate (good tasks refused) against the false-pass rate (bad tasks approved). At $p = 0.5$, \textsc{Unanimous} admits only 1.6\% of bad tasks while refusing 87.6\% of good ones; \textsc{Any} inverts this (10.5\% bad admitted, 52.5\% good refused); \textsc{Majority} is intermediate (6.1\%, 64.9\%). At $N = 3$, \textsc{Quorum} ($\lceil 0.67 \cdot 3 \rceil = 3$) coincides with \textsc{Unanimous}, consistent with the formal model; the two diverge for $N \geq 5$, illustrating that policy choice becomes a meaningful lever as validator count grows.
\textbf{Refusal mode decomposition.} Refusals of good tasks split into validator-level halts (blocker severity) and consensus-level escalations. At $p = 0.5$, \textsc{Unanimous}'s 87.6\% total refusal is 49.1\% HALT and 38.5\% ESCALATE, while \textsc{Any} reduces total refusal to 52.5\%, almost entirely HALT (49.7\%) with negligible escalation (2.8\%). Stricter policies escalate a larger share, raising friction on well-formed work.

\textbf{Quorum--Unanimous convergence.} The coincidence at $N = 3$ is a predicted consequence of the model ($\lceil 0.67 \cdot 3 \rceil = 3$ requires unanimity); the simulation reproduces it exactly, providing a checkable prediction against which the implementation can be validated.

\textbf{Implementation defect surfaced.} The simulation served as a test oracle: an absence of policy differentiation in initial runs flagged a defect (warnings counted as approvals, collapsing all four policies to identical behaviour). Correcting the evaluator restored the parametrised enforcement the model specifies, demonstrating that the analytical model provides a detectable contract against which implementations can be validated.

\begin{figure}[htbp]
  \centering
  \includegraphics[width=0.48\textwidth]{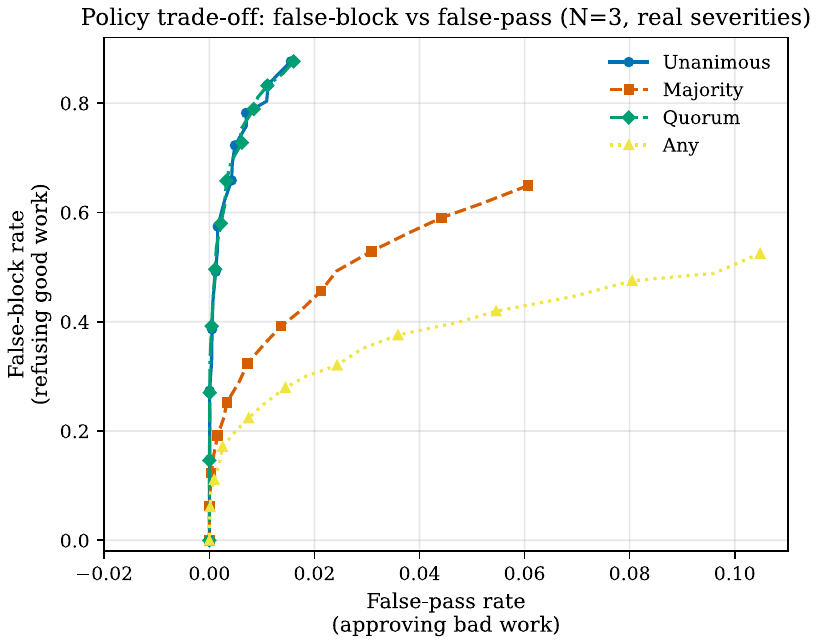}
  \caption{False-block versus false-pass rate for four disagreement policies ($N = 3$, real severity distributions). \textsc{Unanimous} and \textsc{Quorum} nearly eliminate false-passes at the cost of high false-block rates; \textsc{Any} accepts more bad work for lower friction on good work. Policy choice is a first-class enforcement parameter.}
  \label{fig:policy_tradeoff}
\end{figure}
\subsection{Governance Overhead}
\label{sec:overhead}

Structural enforcement adds runtime cost beyond inference. Microbenchmarks on the reference implementation put the CPU-bound operations (policy evaluation, protocol verification, token issuance and verification) all under 50\,µs, and the disk-bound durability operations backing INV4 and INV5 (audit append, checkpoint write) at 127\,µs and 672\,µs. End-to-end, a governed task path (governance, pre- and post-execute validation, token issuance and consumption, audit, checkpointing) adds 33.3\,ms over an ungoverned baseline, with every inference-class component --- coder, validators, and task classification --- stubbed, since inference is the cost against which overhead is compared, not a burden of enforcement. Against a representative 3\,s inference call this is 1.1\% overhead (teams can rescale as $33.3\,\text{ms}/T$ for their own inference latency $T$). The dominant costs are the durability operations, which deployments not needing audit persistence or checkpointing can omit.

\subsection{Detection Probability and the Structural Advantage}
\label{sec:detection}

This study grounds the failure-rate analysis of Section~\ref{sec:failure-bounds} empirically, validating the quorum-miss formula and comparing structural against behavioural failure as pipeline length grows.

\textbf{Quorum-miss formula validation.} Figure~\ref{fig:detection_quorum} compares the analytical $P_{\text{miss}}^{\text{quorum}} = p_v^K + \rho(p_v - p_v^K)$ against Monte Carlo simulation across $\rho \in [0, 1]$, for $K = 3$ and $K = 5$ at $p_v = 0.2$. Simulated points track the analytical curve throughout: at $\rho = 0$ the quorum benefits from independence ($P_{\text{miss}} = 0.008$ for $K = 3$), while at $\rho = 1$ it collapses to a single effective validator ($P_{\text{miss}} = 0.2$). The agreement confirms the implementation faithfully realises the model; whether the correlation assumption matches real cross-provider behaviour is a separate empirical question (Section~\ref{sec:limitations}).

\textbf{The independence ceiling.} The naive independent form $1 - (1 - p)^N$ is an \emph{upper bound} on detection that correlated validators cannot reach: any $\rho > 0$ shifts miss probability above the independent baseline. This is why $\rho$ appears in the model: it is not a conservatism knob but a structural consequence of validators sharing training data and architectural priors. Methodological diversity lowers $\rho$ but cannot drive it to zero.

\textbf{Structural versus behavioural failure by pipeline length.} Figure~\ref{fig:detection_structural} compares the structurally enforced rate (quorum plus recovery) against the behavioural cascading rate as $N$ grows. The gap widens with $N$: behavioural failure rises steeply (0.10 at $N = 1$, 0.45 at $N = 5$, 0.91 at $N = 20$) as corrupted state compounds, while structural failure grows close to linearly (0.016, 0.080, 0.318). At $N = 5$ this is a $5.6\times$ reduction, still nearly $3\times$ at $N = 20$ where the behavioural rate saturates. This is the empirical counterpart of the ratio claim in Section~\ref{sec:failure-bounds}.

\textbf{Ratio sensitivity.} Figure~\ref{fig:detection_sensitivity} maps the $P_{\text{behavioural}} / P_{\text{structural}}$ ratio across the $(\rho, p_{\text{recovery}})$ plane. The advantage is largest when validators are near-independent and recovery reliable (exceeding $100\times$ in the corner) and degrades gracefully otherwise. The ratio exceeds unity across the entire plane: structural enforcement never underperforms behavioural compliance within the modelled range, though the magnitude is an empirical property of the $\rho$ and $p_{\text{recovery}}$ a deployment exhibits.

\begin{figure}[htbp]
  \centering
  \includegraphics[width=0.48\textwidth]{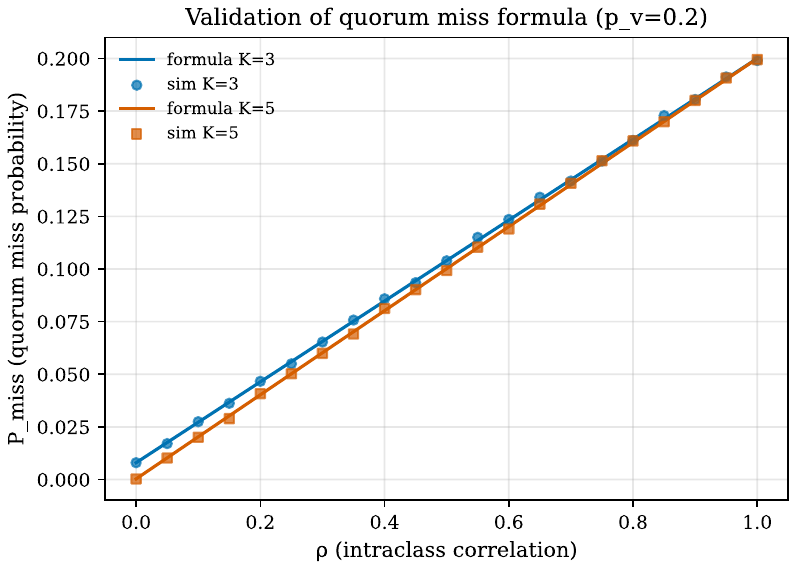}
  \caption{Validation of the quorum-miss formula ($p_v = 0.2$). Simulated points (markers) track the analytical curves (lines) for $K = 3$ and $K = 5$ across the full correlation range. At $\rho = 1$ the quorum provides no benefit over a single validator.}
  \label{fig:detection_quorum}
\end{figure}

\begin{figure}[htbp]
  \centering
  \includegraphics[width=0.48\textwidth]{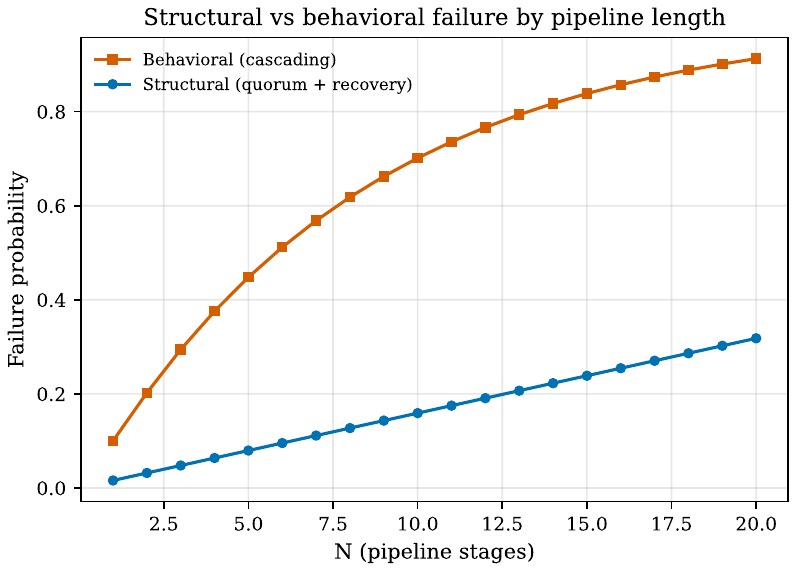}
  \caption{Structural versus behavioural failure probability by pipeline length $N$. Behavioural failure (cascading) rises steeply as corrupted state compounds; structural failure (quorum plus recovery) grows close to linearly.}
  \label{fig:detection_structural}
\end{figure}

\begin{figure}[htbp]
  \centering
  \includegraphics[width=0.48\textwidth]{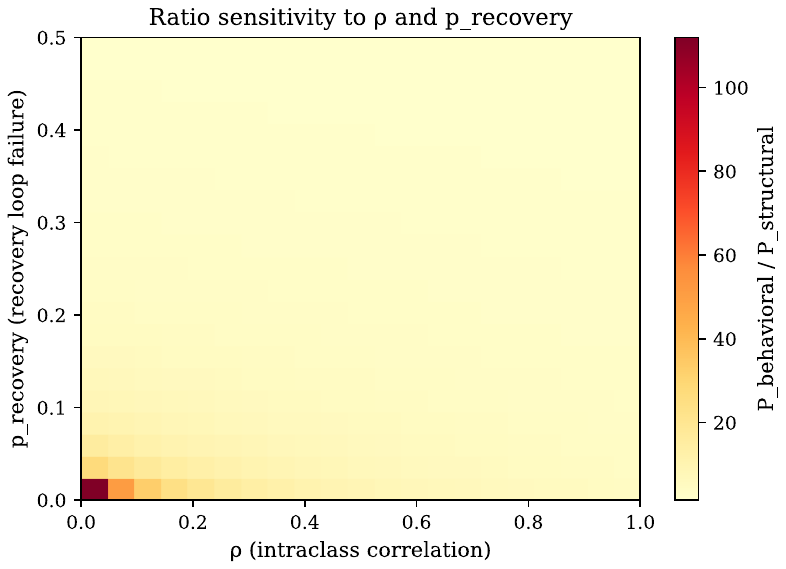}
  \caption{Sensitivity of the $P_{\text{behavioural}} / P_{\text{structural}}$ ratio to validator correlation $\rho$ and recovery-loop failure $p_{\text{recovery}}$. The ratio exceeds unity across the entire plane and is largest when validators are near-independent and recovery is reliable.}
  \label{fig:detection_sensitivity}
\end{figure}

\subsection{Byzantine Robustness}
\label{sec:byzantine}

The final study probes the limits of validator-quorum enforcement under adversarial validators that ignore their criteria and always emit a fixed verdict. We model two attack directions at $N = 5$, honest accuracy $p = 0.7$, sweeping compromised validators $f$ from 0 to 5.

\textbf{Always-pass (collusion / bypass).} When $f$ adversarial validators always approve, the false-pass rate rises with $f$ at a policy-dependent rate (Figure~\ref{fig:byzantine}, left). \textsc{Unanimous} is most robust: it still requires every honest validator to approve, so its false-pass rate stays below 1\% until $f = 3$ (2.2\%), remains 14.8\% even with four of five validators compromised, and reaches certainty only at $f = 5$. \textsc{Majority} and \textsc{Quorum} first exceed 1\% at $f = 2$ (2.2\% and 1.2\%); \textsc{Any} is the most fragile, leaking already at $f = 1$ (0.8\%). Degradation is graceful rather than cliff-edged --- no policy exceeds a one-in-three false-pass rate until a majority of the quorum is compromised --- and the ordering mirrors the false-block trade-off of Section~\ref{sec:policy-monte-carlo}: the policy most permissive of good work is most vulnerable to permissive adversaries.

\textbf{Always-block (denial of service).} When adversarial validators always emit a blocker, a single one ($f = 1$) drives the false-block rate to 100\% for all four policies (Figure~\ref{fig:byzantine}, right). This is INV3 operating exactly as specified: the unconditional rule that any blocker halts execution fires before policy aggregation, so no policy can out-vote it. What is a safety guarantee against missed defects is, under a compromised-restrictive validator, a denial-of-service vector.

\textbf{The INV3 asymmetry.} The always-block direction is \emph{policy-independent} (INV3 forces all policies to halt at $f = 1$); the always-pass direction is \emph{policy-dependent} (\textsc{Unanimous} tolerates up to two compromised validators). No single policy is robust to both directions simultaneously: the non-overridable-blocker semantics that make the system fail safe against missed defects necessarily make it fail closed under a restrictive adversary. This is the quantified cost of INV3 and bounds what validator-quorum enforcement can defend against: the mechanism assumes validators faithful to their criteria, not adversarial. Defending against compromised validators requires orthogonal mechanisms (validator attestation, anomaly detection on verdict distributions) outside the policy layer.

The fail-closed direction deserves a direct answer, because it looks like a denial-of-service vector: one hallucinating or over-conservative validator can halt all work. Three points bound the exposure. First, the failure is \emph{visible by construction}: in the terms of Proposition~\ref{prop:trade}, a false blocker produces an audited stall --- a halt with a structured payload naming the blocking validator and its justification, escalated to a human --- whereas the symmetric failure under behavioural compliance (a false pass) ships silently. The asymmetry is chosen: the recoverable failure mode is made cheap and loud, the unrecoverable one is made structurally hard. Second, the cost is bounded: a stall costs one human adjudication, and the audit log localises the offending validator immediately; a repeatedly-blocking validator is an operational signal, not a systemic corruption. Third, the language already carries the mitigations: severity caps allow a validator-under-evaluation to participate without blocking (every cap audited), disagreement policies other than \textsc{Any-Blocker} trade fail-closed strictness for availability, and meta-level validators can monitor verdict distributions for exactly this anomaly. What the language does not do is let an agent override the gate --- that is INV3, and softening it would surrender the property the rest of the paper establishes.

\begin{figure*}[t]
  \centering
  \includegraphics[width=0.95\textwidth]{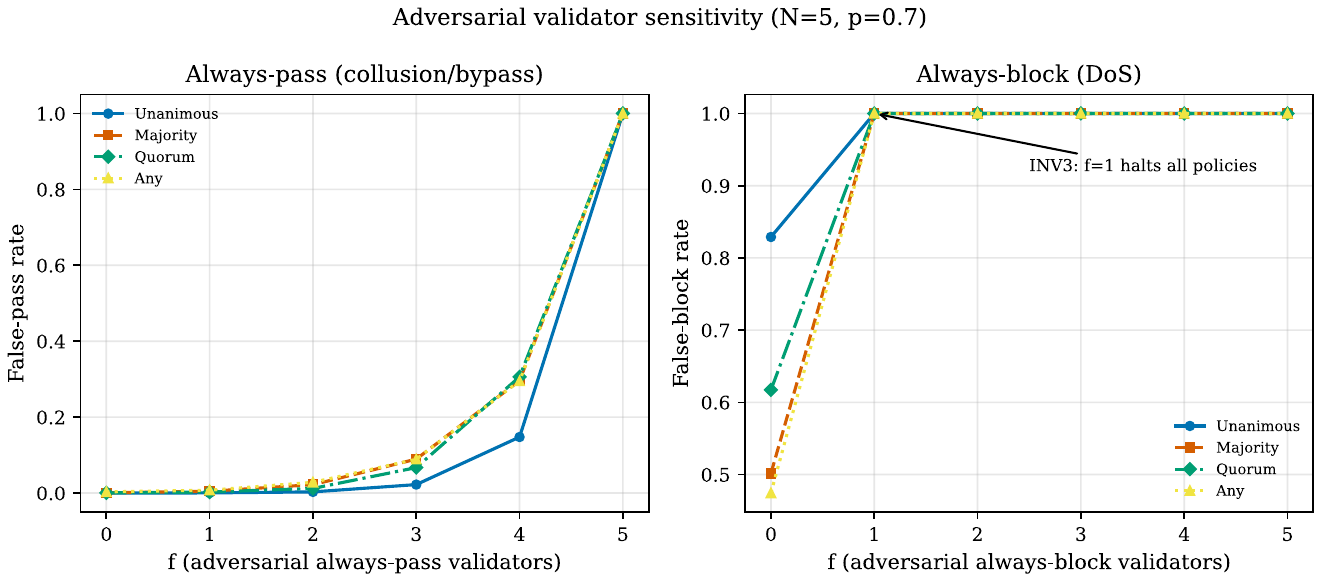}
  \caption{Adversarial validator sensitivity ($N = 5$, $p = 0.7$, 20{,}000 trials per cell). Left: under always-pass adversaries, false-pass rate is policy-dependent, with \textsc{Unanimous} most robust (below 1\% until $f = 3$) and \textsc{Any} most fragile (leaking from $f = 1$). Right: under always-block adversaries, a single compromised validator ($f = 1$) drives all policies to 100\% false-block: INV3 operating as specified.}
  \label{fig:byzantine}
\end{figure*}

%

\subsection{End-to-End Validation on Real Software Tasks}
\label{sec:swebench}

The simulation studies (Sections~\ref{sec:policy-monte-carlo}--\ref{sec:byzantine})
characterise the enforcement layer intrinsically. This study tests, end-to-end on
real software-engineering tasks scored by hidden acceptance suites, the empirical
premise that motivates the language: that a development methodology must be
\emph{executed} as a process, not merely \emph{told} to an agent. It is scoped by
Corollary~\ref{cor:benchmark}: an autonomous, oracle-graded benchmark measures
$P_{\text{silent}} + P_{\text{stall}}$ and therefore cannot reward the governance
value of a non-overridable gate, so the protocol arm runs in \emph{advisory}
configuration (violations route to recovery, never to a hard stop), and what the
study establishes is the value of the executed, validated process --- the
human-in-the-loop value of hard gates requires deployment data and is explicitly
future work (Section~\ref{sec:discussion}). The study deliberately exercises a
\emph{single} primitive --- a post-execute validator with token-gated dispatch and
a bounded recovery loop --- not the full apparatus (multi-validator quorums, the
2+N pattern, Byzantine tolerance), which is characterised in simulation above.

\subsubsection{Design}

We use SWE-bench Verified~\cite{swebench,swebenchverified}, a benchmark of real
GitHub issues paired with the maintainers' hidden regression tests; a task is
\emph{resolved} iff the generated patch makes the failing test pass without
breaking the passing suite. On each task we run arms that share an identical
coding agent and differ \emph{only} in how a fixed bug-fix methodology (``the
correct fix is the smallest change that makes the reported failure pass; do not
refactor, rename, or touch unrelated files'') is delivered:

\begin{itemize}
  \item \textbf{Arm A (bare)}: the agent receives the raw issue, no methodology.
  \item \textbf{Arm B (prose)}: the agent receives the methodology as prompt
        instructions --- the methodology \emph{told}, the standard ``encode
        process in the prompt'' approach (Section~\ref{sec:problem}).
  \item \textbf{Arm B$'$ (executed loop, ablation)}: the \emph{same} methodology
        run as an executed, validated, self-correcting process outside the
        protocol engine: a validator inspects the committed diff for minimality
        and localisation and routes violations back for a tighter fix (up to
        four attempts), implemented as plain advisory scaffolding with no
        tokens and no engine.
  \item \textbf{Arm C (executed protocol)}: the same methodology specified as a
        protocol and run by the engine in advisory configuration: the same
        post-execute validator and recovery loop as B$'$, with dispatch
        token-gated (INV1) and every step audited (INV4). No methodology text
        is added to the coder prompt beyond arm~B's, so B, B$'$, and C hold the
        policy \emph{content} constant and vary only the delivery mechanism.
\end{itemize}

Because the arms run on the \emph{same} tasks, the correct test is paired: we
report resolve rate per arm and the exact two-sided McNemar test on discordant
pairs. To isolate the effect of the coder, the enforcement machinery (the
validator and recovery classifier) is held fixed on a single model across all
runs, so differences between arms reflect the delivery mechanism, not a change
of enforcer. The protocol used is a task-tailored \emph{minimality} protocol: a
prior generic protocol that induced up-front specification authoring
\emph{reduced} resolve rate by inflating patch size (larger diffs resolve less),
which motivated inverting the enforcement target toward minimality rather than
elaboration.

\textbf{Setup and reproducibility.} All arms use the same agent scaffold the \texttt{opencode} CLI agent driving an open-weight coder; only the delivery of the methodology differs (A/B/C above). The enforcement machinery (the minimality validator and the recovery classifier) is fixed to \texttt{deepseek-v4-flash} across every run, so between-arm differences isolate the delivery mechanism, not the enforcer. Coders span two tiers of the DeepSeek-V4 family \texttt{deepseek-v4-flash} (cheap) and \texttt{deepseek-v4-pro} (strong) with \texttt{gpt-oss:20b} used for the capability-boundary condition. Task sets are frozen, seeded, repo-stratified draws from SWE-bench Verified (seeds 13, 29, and 47; per-run $n$ in Table~\ref{tab:swebench}), with one trial per cell (cells were near-deterministic, so budget went to more tasks rather than more trials). Patches are scored by the official SWE-bench evaluation harness under Docker, and the paired significance test is the exact two-sided McNemar on discordant pairs. The full patch text for every cell is retained, so each run is re-scorable from saved artifacts without re-dispatching the agent.

\subsubsection{Result: the executed process beats the told process}

Table~\ref{tab:swebench} reports four independent runs spanning two model tiers
(a cheap and a strong variant of one open-weight family) and task samples up to
$n{=}191$. The pattern is consistent, and it is two-sided. First, the null:
\textbf{methodology as prose does not improve outcomes} --- arm~B never
significantly beats bare in any run (all A-vs-B comparisons non-significant).
Second, the effect: \textbf{the same methodology executed as a validated,
self-correcting process (C) significantly outperforms both} bare and prose,
yielding a 14--22 point absolute gain in resolve rate. The identical policy
content, delivered two ways, produces a null and a large effect; the difference
is entirely in whether the process is executed or merely described.

\begin{table}[htbp]
\centering
\caption{Resolve rate (\%) on SWE-bench Verified across four paired runs: A bare, B prose instruction, C executed protocol (advisory). $p$-values are exact two-sided McNemar on discordant pairs. Rows are \texttt{deepseek-v4-flash}/\texttt{-pro} $\times$ task seed.}
\label{tab:swebench}
\small
\begin{tabular}{@{}lrrrrrr@{}}
\toprule
Run & $n$ & A & B & C & $p_{\text{A--C}}$ & $p_{\text{B--C}}$ \\
\midrule
flash, s13 & 94  & 59.6 & 55.3 & \textbf{69.1} & 0.064 & 0.004 \\
flash, s29 & 96  & 49.0 & 53.1 & \textbf{66.7} & 0.0002 & 0.015 \\
pro, s13   & 98  & 50.0 & 52.0 & \textbf{72.4} & \textless0.0001 & 0.0001 \\
flash, s47 & 191 & 50.3 & 50.3 & \textbf{67.0} & \textless0.0001 & \textless0.0001 \\
\bottomrule
\end{tabular}
\end{table}

The effect replicates across two independent task samples (seeds 13, 29), holds
at higher statistical power ($n{=}191$), and appears on both model tiers. The
loop genuinely engages rather than passing work through unchanged: across runs,
40--90\% of tasks required at least one recovery round, and arm~C uniquely
resolved clusters of tasks that both baselines missed. We are careful about what
this establishes. A validate-and-retry loop is a known effect class (iterated
generation with critique), and we do not claim its benefit as a novel result;
what the experiment isolates is that the \emph{same policy content} yields
nothing when told and a large, replicated gain when executed --- the empirical
motivation for a language whose subject is precisely the specification of
executable process. The guarantee the language adds over ad-hoc scaffolding is
the theorem's (Section~\ref{sec:theorem}): the loop \emph{cannot be skipped},
by construction, rather than happening to run.

\subsubsection{Ablation: the executed loop carries the gain}
\label{sec:bprime}

If the engine's contribution on this benchmark is the executed loop, then the
same loop implemented as plain advisory scaffolding --- no engine, no tokens ---
should reproduce the gain, and the engine in advisory configuration should add
nothing beyond it. That is exactly what the ablation shows. On the highest-power
sample (seed 47, $n{=}191$, re-scored as one pass across all four arms):
A 49.7\%, B 49.2\%, B$'$ 59.7\%, C 60.7\%.\footnote{These rates come from a
single four-arm scoring pass, so that every paired comparison in the ablation
shares one scoring environment; the three-arm seed-47 row of
Table~\ref{tab:swebench} was scored in a separate pass. SWE-bench scoring
applies patches and executes hidden test suites in per-task containers, and
absolute rates are sensitive to the scoring environment (arm~C: 67.0 there
vs.\ 60.7 here); the within-pass discordant-pair statistics are the quantities
of interest and are unaffected by this cross-pass variation.} Paired McNemar: B$'$ vs.\ B,
$p = 0.018$ --- the executed loop significantly beats the same methodology as
prose; B$'$ vs.\ C, $p = 0.90$ --- the engine in advisory configuration is
statistically indistinguishable from the bare loop. This is the expected
mechanistic identity, not a negative surprise: in advisory configuration the
engine \emph{is} the validated loop plus token-gating and audit, and
Corollary~\ref{cor:benchmark} says the additional properties it guarantees
(non-skippability under adversarial or drifting agents, auditability,
enforceable escalation) are precisely the ones an autonomous benchmark cannot
score. The ablation thus cleanly attributes the benchmark gain to the executed
process, and delimits the engine's further value to properties requiring
deployment evidence.

\subsubsection{Model commodity: the executed process compresses the model gap}

Running the strong (``pro'') and cheap (``flash'') tiers under both bare and
executed-process conditions on the same seed-13 task set yields a $2\times2$
grid (Table~\ref{tab:commodity}). Two effects stand out. First, \textbf{a cheap
model under the executed process outperforms a strong model bare}: on the 94
instances scoreable in both runs, flash-with-process resolves 69.1\% against
bare-pro's 48.9\% --- a paired per-instance comparison (discordant pairs 24/5,
exact McNemar $p = 0.0005$) --- so the process layer is worth more than a
model-tier upgrade. Second, the executed process \emph{compresses} the gap
between tiers: bare, the models differ (59.6 vs.\ 50.0); under the process,
they converge near the top (69.1 vs.\ 72.4). Which model to buy matters less than
what process it runs under (Section~\ref{sec:discussion}).

\begin{table}[htbp]
\centering
\caption{Model-commodity $2\times2$ (resolve rate, \%, SWE-bench Verified,
seed 13). The executed process lifts both tiers and compresses the gap between
them; the cheap model under the process beats the strong model bare.}
\label{tab:commodity}
\small
\begin{tabular}{@{}lcc@{}}
\toprule
 & Bare (A) & Executed (C) \\
\midrule
Cheap tier (flash) & 59.6 & 69.1 \\
Strong tier (pro)  & 50.0 & \textbf{72.4} \\
\bottomrule
\end{tabular}
\end{table}

\subsubsection{Boundary condition: the executed process requires a capable coder}

The advantage is not unconditional. Repeating the design with a small, weaker open-weight coder (\texttt{gpt-oss:20b}) collapses the effect: resolve rates fall to
A~9.1\%, B~19.2\%, C~14.1\% ($n{=}99$), the executed process no longer beats prose
($p_{\text{B--C}}{=}0.33$, $p_{\text{A--C}}{=}0.30$), and only B\,$>$\,A reaches
significance ($p{=}0.03$). This is also the one regime in which prose is not a
null, and the two findings are consistent rather than in tension: the prose
null of Table~\ref{tab:swebench} is a claim about coders \emph{above} the
capability floor, where the executed process dominates whatever the text adds;
below the floor, recovery rounds cannot convert critique into fixes, and
methodology text is the only guidance the coder can use at all. Diagnostically, the loop still
engaged --- 47 of 99 tasks exhausted all four recovery rounds and arm~C produced
the fewest empty patches --- but on a coder operating near 15\% base capability,
the process converts ``gave up'' into ``wrong answer'' more often than into
``correct fix.'' The executed process guarantees that validation occurs; it
cannot manufacture a correct patch from a coder incapable of producing one. This
is the collapse Corollary~\ref{cor:floor} predicts: recovery rounds re-sample
the same failure distribution, so as the agent's base failure rate grows the
advantage factor $A(p_a)$ falls to one regardless of how strictly the process
is enforced. The boundary condition was derived before it was observed, and the
diagnostics match the mechanism the model posits (the quorum caught defects;
recovery could not fix them).

\subsubsection{Threats to validity}

SWE-bench Verified predates the models' training cut-offs and is plausibly
contaminated; the paired design controls for this (contamination benefits all
three arms equally, so it cannot explain the between-arm gap), but absolute rates
should not be read as generalisation to unseen code. The evaluation covers a single task type (bug fixing) under one protocol and a single primitive; multi-validator quorums, the 2+N reviewer separation, and richer protocols are validated only in simulation, and feature development and other task types remain future work where the over-elaboration failure mode suggests the approach may have even more headroom. The benchmark's scope is bounded by Corollary~\ref{cor:benchmark}: it can establish the value of the executed process, and does; it cannot, by construction, measure the governance value of non-overridable gates or human-in-the-loop escalation, which require deployment usage data rather than oracle grading. Finally, the enforcer is held fixed on one model to isolate the coder effect; the interaction between coder and
enforcer capability is not yet mapped.

\subsection{Summary}
\label{sec:eval-summary}

Five studies characterise the enforcement layer along complementary axes. The
policy Monte Carlo (Section~\ref{sec:policy-monte-carlo}) shows disagreement
policy is a first-class trade-off between false-block and false-pass. The
overhead study (Section~\ref{sec:overhead}) shows the machinery costs roughly
1\% of inference latency. The detection study (Section~\ref{sec:detection})
validates the failure-rate model and confirms the structural advantage widens
with pipeline length. The Byzantine study (Section~\ref{sec:byzantine}) bounds
what the policy layer can defend against, exposing the INV3 asymmetry. Finally,
the end-to-end study (Section~\ref{sec:swebench}) tests the language's
empirical premise on real software tasks: on SWE-bench Verified, an identical
bug-fix methodology yields nothing when delivered as prose but a large,
replicated gain when executed as a validated, self-correcting process (two model
tiers, task samples up to $n{=}191$); an ablation attributes the gain to the
executed loop itself, exactly as Corollary~\ref{cor:benchmark} predicts for an
advisory configuration under oracle grading; the process compresses the
model-tier gap, with a cheap model under the process beating a strong model
bare; and a weak-coder boundary condition confirms the advantage is contingent
on adequate agent capability, as Corollary~\ref{cor:floor} predicts. Broader task types beyond bug fixing, and the
coder--enforcer capability interaction, remain the principal items of future
work.

\section{Discussion}
\label{sec:discussion}
\subsection{Limitations}
\label{sec:limitations}

Several limitations bound the claims.

\textbf{Scope of the end-to-end evaluation.} Beyond simulation and a self-development feasibility demonstration, Section~\ref{sec:swebench} provides a controlled end-to-end comparison against behavioural-instruction and bare baselines on SWE-bench Verified, scored by hidden acceptance suites. That evaluation covers a single task type (bug fixing) under one enforced protocol and a single open-weight model family; broader task types (notably feature development) and cross-family generalisation remain future work.

\textbf{Analysis under assumptions.} The failure rate bounds (Section~\ref{sec:failure-bounds}) assume failure-rate constancy, stable correlation, and bounded recovery cost; they establish the structural difference, but precise quantitative claims require empirical validation.

\textbf{Token integrity is implementation-dependent.} INV1 requires unforgeable tokens but not a specific mechanism: different contexts (single-process, distributed, multi-tenant) need different primitives. We treat this as a feature: the specification is portable, the enforcement contextual.

\textbf{Validator quality bounds system quality.} Enforcement guarantees that validation occurs, not that the validator is correct; the failure-rate advantage is contingent on validators clearing a useful detection threshold. The mitigation is compositional (deterministic predicates, pre- and post-execution phases, per-mode gating, an orthogonal quorum), but the problem is bounded, not solved: a sufficiently poor or correlated validator set degrades the guarantee however strictly the protocol is enforced.

\textbf{Applicability scales with project complexity, not team size.} The same primitives express single-agent and multi-mode configurations uniformly; what determines whether structural enforcement pays off is project complexity and stage. A simple project can be self-orchestrated by hand, but the manual cost of instructing, validating, and catching missed steps scales badly as complexity grows. We do not establish where this crossover lies.

\subsection{What the Language Does Not Solve}
Writing effective protocols still requires deep software engineering expertise; the language lets senior engineers encode that knowledge but does not supply it. Protocol evolution (who may change a protocol, what review a change requires) is governance the language does not prescribe, though its machine-executable form affords version control and peer review. Verifying temporal properties or stated goals, and synthesising optimal protocols from constraints, remain future work.

\section{Conclusion}

We have presented a specification language for AI-SDLC processes, addressing a documented gap: formal mechanisms for human-agent responsibility boundaries, approval gates, and governance constraints. The language is built on a single idea --- separating \emph{policy} (the declared modes of working) from \emph{mechanism} (structural enforcement via validation tokens, capability boundaries, and non-overridable gates) --- with formal abstract syntax, well-formedness conditions, operational semantics, and enforcement invariants. The separation buys two things at once. First, \emph{flexibility}: a process becomes a portable artifact, illustrated by the 2+N reference pattern with its producer--reviewer separation, orthogonal validator quorum, and non-overridable blockers, that teams reconfigure without touching agent code. Second, \emph{quasi-determinism}, now a theorem: for any well-formed protocol the enforcement invariants hold on every execution trace, closed under Kleene composition (Theorem~\ref{thm:enforcement}), so protocol steps cannot be skipped by construction. A failure-rate analysis makes the consequences precise --- silent failure bounded at a weighted product of agent and validator rates, the remainder converted into visible, audited stalls, and a capability floor below which the benefit vanishes --- validated in simulation and end-to-end. On SWE-bench Verified, an identical methodology yielded nothing delivered as prose instruction and a significant, replicated gain when executed as a validated, self-correcting process, with the capability floor observed where the model predicts it. That result is the thesis in miniature: because the protocol and its invariants are unchanged as models are swapped or upgraded, the advantage transfers across tiers, so as foundation models converge the durable engineering asset becomes the formally specified, executable process, not the model. The effect is not unconditional --- it requires a coder above the capability floor, and we have demonstrated it for bug fixing under one protocol --- so establishing it for feature development and across model families, and measuring in deployment the governance value of hard gates and human-in-the-loop escalation that oracle-graded benchmarks structurally cannot reward (Corollary~\ref{cor:benchmark}), is the natural next step.

\subsection*{Acknowledgments}
Generative AI tools (including Anthropic Claude, Google Gemini, DeepSeek, Qwen, and Kimi) were used, under the authors' direction, as assistive instruments in the preparation of this manuscript and its artifact --- text drafting and revision, LaTeX and figure preparation, coding assistance, and cross-checking the manuscript's claims against the artifact's outputs. All research content, design, and conclusions are the authors' own, and the authors take full responsibility for the work.

\subsection*{Data and Code Availability}

The reference implementation (protocol engine, verifier, MCP servers, and shipped templates), the Lean~4 mechanisation, the five simulation studies with their regenerated outputs, the property-test suite, and the SWE-bench experiment harness with saved per-task artifacts (every run re-scorable without re-dispatching agents) are released as open source (Apache 2.0) at \url{https://github.com/snodo-dev/snodo}, with the submission-time state archived at \url{https://doi.org/10.5281/zenodo.21967946}.

\bibliographystyle{ACM-Reference-Format}
\bibliography{references}

\end{document}